\begin{document}
%

\title{Heterogeneous domain adaptation: An unsupervised approach}
%
%
%

\author{Feng Liu,~\IEEEmembership{Student Member,~IEEE,}
        Guangquan~Zhang,
        and~Jie~Lu,~\IEEEmembership{Fellow,~IEEE}
\thanks{Feng Liu, Guangquan Zhang and Jie Lu are with the Centre for Artificial Intelligence, Faulty of Engineering and Information Technology, University of Technology Sydney, Sydney,
NSW, 2007, Australia, e-mail:  \{Feng.Liu; Guangquan.Zhang; Jie.Lu\}@uts.edu.au.}
}

%
%

\markboth{IEEE TRANSACTIONS ON NEURAL NETWORKS AND LEARNING SYSTEMS, Early Access, 2020}%
{Liu \MakeLowercase{\textit{et al.}}: Heterogeneous domain adaptation: An unsupervised approach}
%



\maketitle

\begin{abstract}
Domain adaptation leverages the knowledge in one domain - the source domain - to improve learning efficiency in another domain - the target domain. {Existing heterogeneous domain adaptation research is relatively well-progressed, but only in situations where the target domain contains at least a few labeled instances. In contrast, heterogeneous domain adaptation with an unlabeled target domain has not been well-studied.} To contribute to the research in this emerging field, this paper presents: (1) an unsupervised knowledge transfer theorem that guarantees the correctness of transferring knowledge; and (2) {a principal angle-based metric to measure the distance between two pairs of domains: one pair comprises the original source and target domains and the other pair comprises two homogeneous representations of two domains.} The theorem and the metric have been implemented in an innovative transfer model, called a \emph{Grassmann-Linear monotonic maps-geodesic flow kernel} (GLG), that is specifically designed for \emph{heterogeneous unsupervised domain adaptation} (HeUDA). The linear monotonic maps meet the conditions of the theorem and are used to construct homogeneous representations of the heterogeneous domains. The metric shows the extent to which the homogeneous representations have preserved the information in the original source and target domains. By minimizing the proposed metric, the GLG model learns the homogeneous representations of heterogeneous domains and transfers knowledge through these learned representations via a geodesic flow kernel. To evaluate the model, five public datasets were reorganized into ten HeUDA tasks across three applications: cancer detection, credit assessment, and text classification. The experiments demonstrate that the proposed model delivers superior performance over the existing baselines.
\end{abstract}

\begin{IEEEkeywords}
Transfer learning, domain adaptation, machine learning, classification.
\end{IEEEkeywords}

%
\IEEEpeerreviewmaketitle

\section{Introduction}
\label{sec:intro}
%
%
%
%
\IEEEPARstart{I}{n} the field of \emph{artificial intelligence} (AI), and particularly in machine learning, storing the knowledge learned by solving one problem and applying it to a similar problem is very challenging. For example, the knowledge gained from recognizing cars could be used to help recognize trucks, value predictions for US real estate could help predict real estate values in Australia, or knowledge learned by classifying English documents could be used to help classify Spanish documents. As such, transfer learning models \cite{Pan2010,Lu2015,ShaoZL15,liu2019butterfly} have received tremendous attention by scholars in object recognition \cite{Gong2014,Luo_IJCAI_18, Yan_TNN_18,Yang_TNN_16}, AI planning \cite{Zhuo2014}, reinforcement learning \cite{Bianchi2015,Nguyen2017,Chalmers_TNN_18}, recommender systems \cite{Zhao2017,Pan2013}, and natural language processing \cite{Zhao2014}. Compared to traditional single-domain machine learning models, transfer learning models have clear advantages. (1) {The knowledge learned from one domain - the source domain - can help improve prediction accuracy in another domain - the target domain - particularly when the target domain has scant data} \cite{Ma2014}, and, 2) knowledge from a labeled domain can help predict labels for an unlabeled domain, which may avoid a costly human labeling process \cite{Gopalan2014}.

Of the proposed transfer learning models, domain adaptation models have demonstrated good success in various practical applications in recent years \cite{Ghifary2017,Courty2017}. Most domain adaptation models focus on \emph{homogeneous unsupervised domain adaptation} (HoUDA); that is, where the source and target domains have similar, same-dimensionality feature spaces and there are no labeled instances the target domain \cite{domain_adaptation_bounds}. 
Nevertheless, given the time and cost associated with human labeling, target domains are heterogeneous\footnote{{In the field of domain adaptation, ``heterogeneity" often represents that 1) dimensionality of source and target domains are different and 2) features of two domains are disjoint.}} and unlabeled, which means most existing HoUDA models do not perform well on the majority of target domains. Thus, \emph{heterogeneous unsupervised domain adaptation} (HeUDA) models are proposed to handle the situation where target domain is heterogeneous and unlabeled.

However, existing HeUDA models need parallel sets to bridge two heterogeneous domains, i.e., there are very similar instances in both heterogeneous domains, which is not realistic in the real world. For example, credit assessment data is confidential and private, and the information of each instance cannot be accessed. Thus, we cannot find similar instances between two credit-assessment domains. Namely, parallel sets (needed by existing HeUDA models) do not exist in this scenario. To the best of our knowledge, little theoretical discussion has taken place in regard to the absence of a parallel set in the HeUDA setting. This \emph{gap} limits the ability of HeUDA models to be used in more scenarios.

The aim of this paper is to fill this gap by establishing a theoretical foundation for HeUDA models that predict labels for a heterogeneous and unlabeled target domain without parallel sets. 
We are motivated by the observation that two heterogeneous domains may come from one domain. Namely, features of two heterogeneous domains could be outputs of heterogeneous projections of features of the one domain (see Figure~\ref{fig: main_flow}). In the following two paragraphs, we present two examples to describe this observation.

Sentences written in Latin can be translated into sentences written in French and Spanish. The French and Spanish sentences have different representations but share a similar meaning. If the Latin sentences are labeled as ``positive", then the French and Spanish sentences are probably labeled as ``positive''. {In this example, we can construct a Latin domain using Latin sentences and the task (labeling sentences as ``positive" or ``negative"). Then, French domain and Spanish domain come from one domain: Latin domain, where French and Spanish domains consist of French and Spanish sentences (translated from Latin sentences) and the task (labeling sentences as ``positive" or ``negative"). 

Taking another example in real-world scenarios: human sentiment, as an underlying domain (to analyze whether a person is happy), is difficult to record accurately. We can only obtain its projection or representation on real events, such as Amazon product reviews and Rotten Tomatoes movie reviews. The Amazon product reviews and the Rotten Tomatoes movie reviews are two heterogeneous domains but come from an underlying domain: human sentiment.}

Based on this observation, we propose two key factors, $V$ and $D$, to reveal the similarity between two heterogeneous domains:
\begin{itemize}
    \item the variation ($V$) between the conditional probability density functions of both domains;
    \item the distance ($D$) between the feature spaces of the two heterogeneous domains.
\end{itemize}
In general, small $V$ means that two domains have similar ground-truth labeling functions and small $D$ means that two feature spaces are close. 

In this paper, we construct \emph{homogeneous representations} to preserve the original similarity (evaluated by $V$ and $D$) between two heterogeneous domains, while allowing knowledge to be transferred. 
We denote $V_{He}$, $V_{Ho}$, $D_{He}$ and $D_{Ho}$ by values of $V$ and $D$ of the original \emph{heterogeneous} (He) domains and the \emph{homogeneous} (Ho) representations. The basic assumption of unsupervised domain adaptation models is that two domains have similar ground-truth labeling functions. Hence, the constructed homogeneous representations must make $V_{Ho}\le V_{He}$. Similarly, $D_{Ho}\le D_{He}$ is expected, indicating that the distance between two feature spaces of the homogeneous representations is small. We mainly focus on how to construct the homogeneous representations where $V_{Ho} = V_{He}$ and $D_{Ho} = D_{He}$ (the exact homogeneous representations of two heterogeneous domains).

To ensure the efficacy of the homogeneous representations, this paper presents: (1) an unsupervised knowledge transfer theorem that guarantees the correctness of transferring knowledge (to make $V_{Ho} = V_{He}$); and (2) a principal angle-based metric to measure the distance between two pairs of domains: one pair comprises the original source and target domains and the other pair comprises two homogeneous representations of two domains (to help make $D_{Ho} = D_{He}$). Based on the constructed exact homogeneous representations of two heterogeneous domains, HoUDA models can be applied to transfer knowledge across the representations. 

\begin{figure}[!tp] 
\centering\includegraphics[width=3.5in]{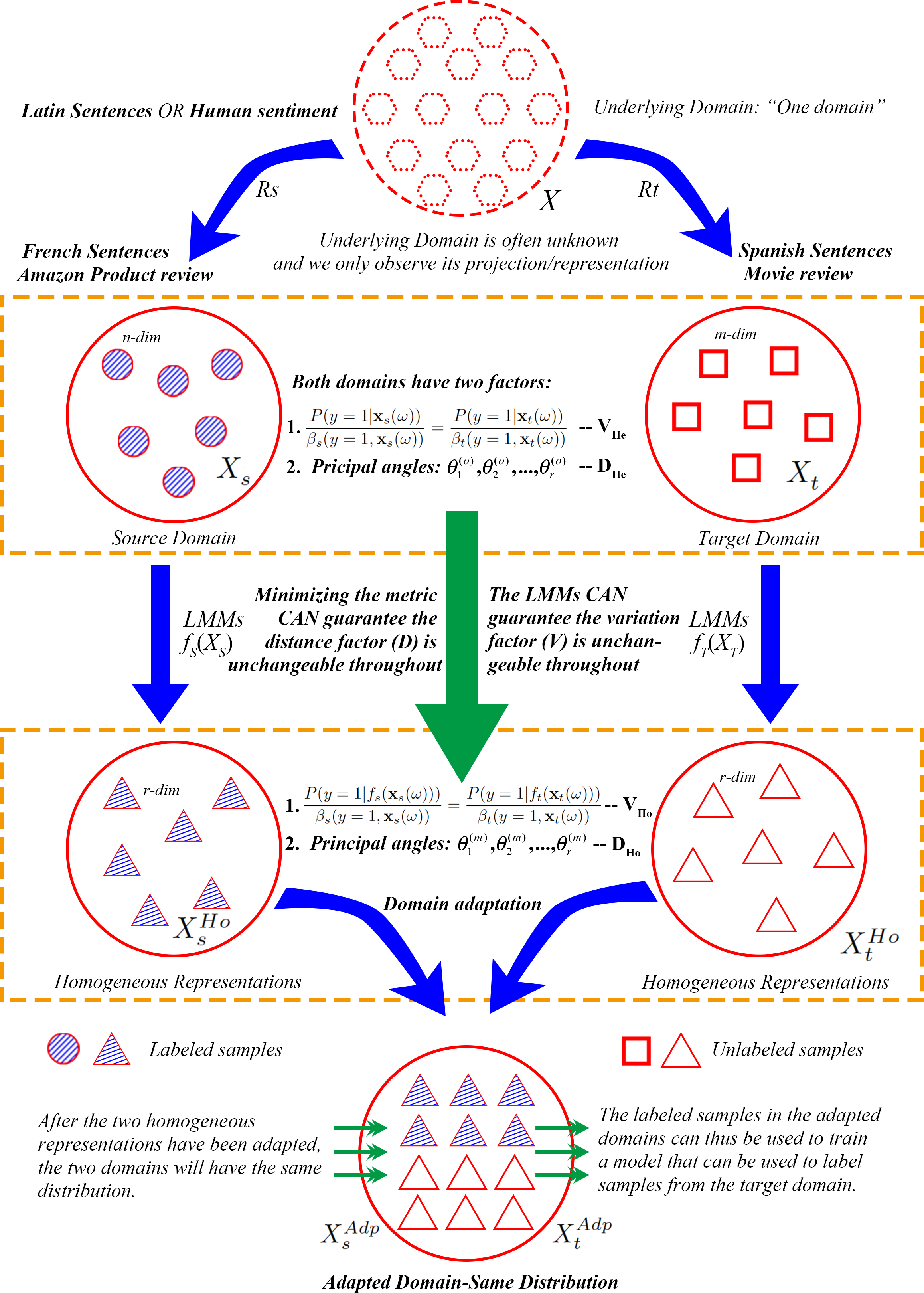} 
\caption{{The progress of the GLG model. The original source and target domains come from the same underlying domain (e.g., classifying Latin sentences or analyzing human sentiment). However, the underlying domain is hard to observe and we can only observe its projection/representation on two (or more) domains, e.g., two heterogeneous domains in this figure. Two factors are used to describe the similarity between two heterogeneous domains, {$V_{He}$} and {$D_{He}$}. Hence, we hope that two homogeneous representations will have the same similarity as the original domains. The LMMs can guarantee that the variation factor is unchangeable, and minimizing {$J_1$} (presented in Eq. {\eqref{eq: costI}}) can guarantee that the distance factor is unchangeable. After constructing homogeneous representations, GFK is applied to transfer knowledge across domains.}}\label{fig: main_flow} 
\end{figure} 

The unsupervised knowledge transfer theorem sets out the transfer conditions necessary to prevent negative transfer (to make $V_{Ho} = V_{He}$). \emph{Linear monotonic maps} (LMMs) meet the transfer conditions of the theorem and are therefore used to construct the homogeneous representations. Rather than directly measuring the distance between two heterogeneous feature spaces, the distance between two feature subspaces of different dimensions is measured using the principal angles of Grassmann manifold. This new distance metric reflects the extent to which the homogeneous representations have preserved the geometric relationship between the original heterogeneous domains (to make $D_{Ho} = D_{He}$). It is defined on two pairs of subspace sets; one pair of subspace sets reflects the original domains, the other reflects the homogeneous representations. 

Homogeneous representations of the heterogeneous domains are constructed by minimizing the distance metric based on the constraints associated with LMMs, i.e., minimize $\|D_{Ho} - D_{He}\|_{\ell_1}$ under the constraints $V_{Ho} = V_{He}$. Knowledge is transferred between the domains through the homogeneous representations via a \emph{geodesic flow kernel} (GFK) \cite{Gong2014}. The complete proposed HeUDA model incorporates all these elements and is called the Grassmann-LMM-GFK model - GLG for short. Figure \ref{fig: main_flow} illustrates the process of GLG.

To validate the efficacy of GLG, five public datasets were reorganized into ten tasks across three applications: cancer detection, credit assessment, and text classification. The experimental results reveal that the proposed model can reliably transfer knowledge across two heterogeneous domains when the target domain is unlabeled and there are no parallel sets. The main contributions of this paper are: 

1)	an effective heterogeneous unsupervised domain adaptation model, called GLG, that is able to transfer knowledge from a source domain to an unlabeled target domain in settings where both domains have heterogeneous feature spaces and are free of parallel sets;

2)	an unsupervised knowledge transfer theorem that prevents negative transfer for HeUDA models; and

3)	a new principal angle based metric shows the extent to which homogeneous representations have preserved the geometric distance between the original domains, and reveals the relationship between two heterogeneous (different-dimensionality) feature spaces.

This paper is organized as follows. Section II includes a review of the representative domain adaptation models. Section III introduces the GLG model, and its optimization process is presented in Section IV. Section V describes the experiments conducted to test the model's effectiveness. Section VI concludes the paper and discusses future works. Proofs of lemmas and theorems can be found in the Appendix.

\section{Related work}
In this section, homogeneous unsupervised domain adaptation models and heterogeneous domain adaptation models  which are most related to work are reviewed, and GLG is compared with these models.

\subsection{Homogeneous unsupervised domain adaptation}
To address HoUDA problem, there are four main techniques: the Grassmann-manifold method \cite{Gong2014,Gopalan2014,Fernando2013,Sun2015,Sun2016}, the integral-probability-metric method \cite{Gong2016,Long2016,Long2016a,Cao2018,Rozantsev2018}, the pseudo-labeling method \cite{KSaito_ICML17,Long_JDA,behbood2015multistep} and the adversarial-training method \cite{DANN_JMLR,Zhun_TIP_2019}. 
GFK, as a Grassmann-manifold-based model, seeks the best of all subspaces between the source and target domains, using the geodesic flow of a Grassmann manifold to find latent spaces through integration \cite{Gong2014}. 

\emph{Transfer component analysis} (TCA) \cite{Pan2011} applies \emph{maximum mean discrepancy} (MMD \cite{Gretton2012}, an integral probability metric) to measure the distance between the source and target feature spaces, and optimizes this distance to make sure the two domains are closer than before. 
\emph{Joint distribution adaptation} (JDA) \cite{Long_JDA} improves TCA by jointly matching marginal distributions and conditional distributions. 
\emph{Scatter component analysis} (SCA) \cite{Ghifary2017} extends TCA and JDA, and considers the between and within class scatter. 
\emph{Wasserstein Distance Guided Representation Learning} (WDGRL) \cite{shen2018wasserstein} minimizes the distribution discrepancy by employing Wasserstein Distance in neural networks. \emph{Deep adaptation networks} (DAN) \cite{Long_DAN_journal} and \emph{joint adaptation networks} (JAN) \cite{Long_JAN} employ MMD and deep neural networks to learn the best domain-invariant representations of two domains. 

{Asymmetric Tri-training domain adaptation} \cite{KSaito_ICML17}, as a pseudo-labeling-based model, is trained with labeled instances from a source domain and a pseudo-labeled target domain.

\emph{Domain-adversarial neural network} (DANN), as an adversarial-training-based model, is directly inspired by the theory on domain adaptation, suggesting that predictions must be made based on features that cannot discriminate between the training (source) and test (target) domains.

\subsection{Heterogeneous domain adaptation}
There are three types of heterogeneous domain adaptation models: \emph{heterogeneous supervised domain adaptation} (HeSDA), \emph{heterogeneous semi-supervised domain adaptation} (HeSSDA), and HeUDA. Following \cite{Li2014,Xiao2015}, {``heterogeneity" in the domain adaptation field often represents the source and target features as having different dimensionality and being disjoint. For example, if 1) German credit record has $24$ features and Australian credit record has $14$ features and 2) features from German credit record and Australian credit record are disjoint, then we say that that German credit record and Australian credit record are heterogeneous.}

HeSDA/HeSSDA aims to transfer knowledge from a source domain to a heterogeneous target domain, in which the two domains have different features. There is less literature on this setting than there is for homogeneous situations. The main models are \emph{heterogeneous spectral mapping} (HeMap) \cite{Shi2013}, manifold alignment-based models (MA) \cite{Wang2011}, \emph{asymmetric regularized cross-domain transformation} (ARC-t) \cite{Kulis2011}, \emph{heterogeneous feature augmentation} (HFA) \cite{Li2014}, co-regularized online transfer learning \cite{Zhao2014}, \emph{semi-supervised kernel matching for domain adaptation} (SSKMDA) \cite{Xiao2015}, the DASH-N model \cite{Nguyen2015}, Discriminative correlation subspace model \cite{Yan_IJCAI_17} and semi-supervised entropic Gromov-Wasserstein discrepancy \cite{Yan_IJCAI_18}.

Of these models, ARC-t, HFA and co-regularized online transfer learning only use labeled instances in both domains; the other models are able to use unlabeled instances to train models. 
HeMap works by using spectral embedding to unify different feature spaces across the target and source domains, even when the feature spaces are completely different \cite{Shi2013}. Manifold alignment derives its mapping by dividing the mapped instances into different categories according to the original observations \cite{Wang2011}. SSKMDA maps the target domain points to similar source domain points by matching the target kernel matrix to a submatrix of the source kernel matrix based on a Hilbert Schmidt Independence Criterion \cite{Xiao2015}. 

DASH-N is proposed to jointly learn a hierarchy of features combined with transformations that rectify any mismatches between the domains and has been successful in object recognition \cite{Xiao2015}. A discriminative correlation subspace model
is proposed to find the optimal discriminative correlation subspace for the source and target domain. \cite{Yan_IJCAI_18} presents a novel HeSSDA model by exploiting the theory of optimal transport, a powerful tool originally designed for aligning two different distributions. \emph{Progressive alignment} (PA) \cite{li2018_TNNLS_HDA} is implemented to learn representations of two heterogeneous domains with an unsupervised algorithm, but it still needs labeled instances from the target domain to train a final classifier which can handle possible negative transfer situations.

Unsupervised domain adaptation models based on homogeneous feature spaces have been widely researched. However, HeUDA models are rarely studied due to two shortcomings of current domain adaptation models: the feature spaces must be homogeneous, and there must be at least some labeled instances in the target domain (or there must be a parallel set in both domains). The hybrid heterogeneous transfer learning model \cite{Zhou2014} uses the information of the parallel set of both domains to transfer knowledge across domains. 

Domain
Specific Feature Transfer \cite{Wei2018} is designed to address the HeUDA problem when two domains have common features. \emph{Kernel canonical correlation analysis} (KCCA) \cite{Yeh2014} was proposed to address HeUDA problems when there are paired instances in the source and target domains, but KCCA is not valid when paired instances unavailable.
\emph{Shared fuzzy equivalence relations} (SFER) \cite{Liu_TFS} designs a novel fuzzy co-clustering method to separately cluster features of two domains into the same categories. Using these categories as a bridge, knowledge is transferred across two domains.

\vspace{-0.5em}\subsection{Comparison to related work}

The SCA model, as an example of existing HoUDA models, incorporates a fast representation learning algorithm for unsupervised domain adaptation. However, this model can only transfer knowledge across homogeneous domains.

The SSKMDA model, as an example of existing HeSSDA models, however, relies on labeled instances in the target domain to help  
correctly measure the similarity between two heterogeneous feature spaces (i.e., $V$ and $D$ in Section~\ref{sec:intro}). Compared to SSKMDA, GLG relies on the unsupervised knowledge transfer theorem to maintain $V$ and the principal angles of a Grassmann manifold to measure the distance ($D$) between two heterogeneous feature spaces. Therefore, GLG does not require any labeled instances in the target domain. 

Compared to existing HeUDA models, e.g. KCCA, it can transfer knowledge between two heterogeneous domains when both domains have paired instances and the target domain is unlabeled. However, the models are invalid {when there are no paired instances.} GLG is designed to transfer knowledge without needing paired instances and is based on a theorem that prevents negative transfer. 

\section{Heterogeneous Unsupervised domain adaptation}

Our HeUDA model, called GLG, is built around an unsupervised knowledge transfer theorem that avoids negative transfer through a variation factor $V$ that measures the difference between the conditional probability density functions in both domains. The unsupervised knowledge transfer theorem guarantees \emph{linear monotonic maps} (LMMs) against negative transfer once used to construct homogeneous representations of the heterogeneous domains (because $V_{Ho}=V_{He}$). A metric, which reflects the distance between the original domains and the homogeneous representations, ensures that the distance factor $D_{He}$ between the original domains is preserved (i.e., $D_{Ho} = D_{He}$). Thus, the central premise of the GLG model is to find the best LMM such that the distance between the original domains is preserved. 


\subsection{Problem setting and notations}

Following our motivation (two heterogeneous domains may come from one domain), we first give a distribution $\mathcal{P}$ over a multivariate random variable $\mathbf{X}$ defined on an instance set $\mathcal{X}$, $\mathbf{X}: \mathcal{X}\rightarrow\mathbb{R}^k$ and a labeling function $f: \mathbb{R}^k\rightarrow[0,1]$. The value of $f(\mathbf{X})$ corresponds to the probability that the label of $\mathbf{X}$ is 1. {In this paper, we use $\omega$ to represent a subset of $\mathcal{X}$, i.e. $\omega \subset \mathcal{X}$, and use $P(\mathbf{Y}=1|\mathbf{X})$ to represent $f(\mathbf{X})$, where $\mathbf{Y}$ is the label of $\mathbf{X}$ and the value of $\mathbf{Y}$ is $-1$ or $1$. The multivariate random variables corresponding to features of two heterogeneous domains are images of $\mathbf{X}$:
\begin{align}
\mathbf{X_s} = R_s(\mathbf{X}),~~\mathbf{X_t} = R_t(\mathbf{X}),
\end{align}
where $R_s: \mathbb{R}^k\rightarrow\mathbb{R}^m$, $R_t: \mathbb{R}^k\rightarrow\mathbb{R}^n$, $\mathbf{X_s}\sim\mathcal{P}_s$ and $\mathbf{X_t}\sim\mathcal{P}_t$. In the heterogeneous unsupervised domain adaptation setting, $m \neq n$ and we can observe a source domain $\mathbf{D_s}=\{(x_{si},y_{si})\}_{i=1}^{N}$ and a target domain $\mathbf{D_t}=\{(x_{ti})\}_{i=1}^{N}$, where $x_{si}\in\mathbb{R}^m$, $x_{ti}\in\mathbb{R}^n$ are observations of the multivariate random variables $\mathbf{X_s}$ and $\mathbf{X_t}$, respectively, and $y_{si}$, taking value from $\{-1,1\}$, is the label of $x_{si}$. $X_s=\{(x_{si})\}_{i=1}^{N}$ builds up a feature space of $\mathbf{D_s}$ and $X_t=\{(x_{ti})\}_{i=1}^{N}$ builds up a feature space of $\mathbf{D_t}$ and $Y_s=\{(y_{si})\}_{i=1}^{N}$ builds up of a label space of $\mathbf{D_s}$. In the following section, $\mathbf{D_s}=(X_s, Y_s)$ and $\mathbf{D_t}=(X_t)$ for short. The HeUDA problem is how to use $\mathbf{D_s}$ and $\mathbf{D_t}$ to label each $x_{ti}$ in $\mathbf{D_t}$.

{In the language example (see  Section~\ref{sec:intro}), $\mathcal{X}$ represents sentences written in Latin and $\omega$ is a subset to collect some Latin sentences from $\mathcal{X}$. $\mathbf{X}$ is a multivariate random variable and represents the Latin representations of sentences in $\mathcal{X}$. Since we consider that French and Spanish sentences are translated from Latin sentences, $\mathbf{X_s}$ is the French representations of sentences in $\mathcal{X}$ and $\mathbf{X_t}$ be the Spanish representations of sentences in $\mathcal{X}$. It should be noted that, in general, Latin sentences and French (or Spanish) sentences are disjoint. However, in this example, French (or Spanish) sentences are translated from Latin sentences, which means that French (or Spanish) sentences and Latin sentences are associated.}

\subsection{Unsupervised knowledge transfer theorem for HeUDA}
\label{sec:theorem}

This subsection first presents the relationships between $P(\mathbf{Y}=1|\mathbf{X})$ and $P(\mathbf{Y}=1|\mathbf{X_s})$ (or $P(\mathbf{Y}=1|\mathbf{X_t})$) and then gives the definition of the variation factor ($V$) between $P(\mathbf{Y}=1|\mathbf{X_s})$ and $P(\mathbf{Y}=1|\mathbf{X_t})$. Based on $V$, we propose the unsupervised knowledge transfer theorem for HeUDA. 

Given a measurable subset $\omega\subset\mathcal{X}$, we can obtain the probability $c(\omega) = P(\mathbf{Y}=1|\mathbf{X}(\omega))$. We expect that the probability $P(\mathbf{Y}=1|R_s(\mathbf{X}(\omega)))$ and $P(\mathbf{Y}=1|R_t(\mathbf{X}(\omega)))$ will be around $c(\omega)$. If $\omega$ is regarded as the Latin sentences mentioned in Section~\ref{sec:intro}, $\mathbf{X}_s = R_s(\mathbf{X}(\omega))$ and $\mathbf{X}_t = R_t(\mathbf{X}(\omega))$ are French and Spanish representations of the Latin sentences. If the Latin sentences are labeled as ``positive" ($\mathbf{Y}=1$), we of course expect that the French and Spanish sentences will have a high probability of being labeled as ``positive". To ensure this, $\forall \omega\subset\mathcal{X}$, we assume the following equality holds.
\begin{align}
\label{eq: equality_main}
\frac{P(\mathbf{Y}=1|\mathbf{X}_s(\omega))}{\beta_s(\mathbf{Y}=1,\mathbf{X}_s(\omega))} = \frac{P(\mathbf{Y}=1|\mathbf{X}_t(\omega))}{\beta_t(\mathbf{Y}=1,\mathbf{X}_t(\omega))} = c(\omega),
\end{align}
where $\beta_s(\mathbf{Y}=1,\mathbf{X}_s(\omega))$ and $\beta_t(\mathbf{Y}=1,\mathbf{X}_t(\omega))$ are two real-value functions. Since two heterogeneous domains have a similar task (i.e., labeling sentences as ``positive" or ``negative"), we know $\beta_s(\mathbf{Y}=1,\mathbf{X}_s(\omega))$ and $\beta_t(\mathbf{Y}=1,\mathbf{X}_t(\omega))$ should be around $1$ and have following properties for any $\omega$.
\begin{align}
\label{eq: negativeT}
&{\beta_s(\mathbf{Y}=1,\mathbf{X}_s(\omega))} \neq \frac{1-c(\omega)}{c(\omega)} \nonumber \\ 
&\textnormal{or}~~{\beta_t(\mathbf{Y}=1,\mathbf{X}_t(\omega))} \neq \frac{1-c(\omega)}{c(\omega)}.
\end{align}
{The properties described in {\eqref{eq: negativeT}} ensure that it is beneficial to transfer knowledge from the source domain to the target domain. If we do not have both properties described in {\eqref{eq: negativeT}}, i.e.,  ${\beta_s(\mathbf{Y}=1,\mathbf{X}_s(\omega))} = (1-c(\omega))/{c(\omega)}$, we will have $P(\mathbf{Y}=1|\mathbf{X}_s(\omega)) = 1-c(\omega) = P(\mathbf{Y}=-1|\mathbf{X}(\omega))$, indicating that positive Latin sentences are represented by negative French sentences.} 
Based on \eqref{eq: equality_main}, we define the variation factor $V_{He}(P(\mathbf{Y}=1|\mathbf{X}_s(\omega)), P(\mathbf{Y}=1|\mathbf{X}_t(\omega)))$ as follows.
\begin{align}
\label{eq: V_factor}
V&_{He}(P(\mathbf{Y}=1|\mathbf{X}_s(\omega)), P(\mathbf{Y}=1|\mathbf{X}_t(\omega))) \nonumber \\ 
&= \big|P(\mathbf{Y}=1|\mathbf{X}_s(\omega))- P(\mathbf{Y}=1|\mathbf{X}_t(\omega))\big| \nonumber \\ 
&= c(\omega)\big|\beta_s(\mathbf{Y}=1,\mathbf{X}_s(\omega))-\beta_t(\mathbf{Y}=1,\mathbf{X}_t(\omega))\big|.
\end{align}

To study how to correctly transfer knowledge across two heterogeneous domains, we first give a definition of extreme negative transfer to show the worst case.
\newtheorem{mydef}{Definition}

\begin{mydef}[{Extreme negative transfer}]\label{def: extreme_negative_transfer}
Given $\mathbf{X_s}\sim\mathcal{P}_s$,  $\mathbf{X_t}\sim\mathcal{P}_t$ and Eq.~\eqref{eq: equality_main}, if, $\forall\omega\subset \mathcal{X}$, $f_s(\mathbf{X_s}): \mathbb{R}^m\rightarrow\mathbb{R}^r$ and $f_t(\mathbf{X_t}): \mathbb{R}^n\rightarrow\mathbb{R}^r$ satisfy
\begin{align*}
P(\mathbf{Y}=1|f_s(\mathbf{X}_s(\omega))= P(\mathbf{Y}=-1|f_t(\mathbf{X}_t(\omega)),
\end{align*}
then we call that $f_s(\mathbf{X_s})$ and $f_t(\mathbf{X_t})$ cause extreme negative transfer.
\end{mydef}

{Based on Definition {\ref{def: extreme_negative_transfer}}, if extreme negative transfer happens, we will transfer incorrect knowledge across two domains. In experiments, we can use target-domain classification accuracy to quantify extreme negative transfer: lower accuracy means that extremer negative transfer happens. Section~\ref{sec:exp_1_RMG} shows the consequence caused by extreme negative transfer.}

However, we cannot quantify extreme negative transfer without presence of labeled data in target domain. Thus, to avoid the extreme negative transfer in advance, we present the heterogeneous unsupervised domain adaptation condition as follows. Satisfying this condition means that the knowledge will be transferred in expected way.

\begin{mydef}[HeUDA condition]\label{def: HeUDA_condition}
Given $\mathbf{X_s}\sim\mathcal{P}_s$,  $\mathbf{X_t}\sim\mathcal{P}_t$ and the Eq.~\eqref{eq: equality_main}, if there are two maps $f_s(\mathbf{X_s})$ and $f_t(\mathbf{X_t})$, then, $\forall\omega\subset \mathcal{X}$, the heterogeneous unsupervised domain adaptation condition can be expressed by the following equation.
\begin{align}\label{eq: HeUDA_condition}
\frac{P(\mathbf{Y}=1|f_s(\mathbf{X}_s(\omega)))}{\beta_s(\mathbf{Y}=1,\mathbf{X}_s(\omega))} = \frac{P(\mathbf{Y}=1|f_t(\mathbf{X}_t(\omega)))}{\beta_t(\mathbf{Y}=1,\mathbf{X}_t(\omega))} = c(\omega),
\end{align}
where $\omega$ is a measurable set.
\end{mydef}

If this condition is satisfied, it is clear that
\begin{align*}
&P(\mathbf{Y}=1|f_s(\mathbf{X}_s(\omega)) \neq P(\mathbf{Y}=-1|f_t(\mathbf{X}_t(\omega)),
\end{align*}
and
\begin{align*}
&V_{Ho}(P(\mathbf{Y}=1|f_s(\mathbf{X}_s(\omega)), P(\mathbf{Y}=1|f_t(\mathbf{X}_t(\omega))) \\
&= c(\omega)\big|\beta_s(\mathbf{Y}=1,\mathbf{X}_s(\omega))-\beta_t(\mathbf{Y}=1,\mathbf{X}_t(\omega))\big|,
\end{align*}
indicating that $f_s$ and $f_t$ will not cause extreme negative transfer and $V_{He}=V_{Ho}$.
\newtheorem{myrem}{Remark}
\begin{myrem}
{The HeUDA condition defined in Definition {\ref{def: HeUDA_condition}} is a sufficient condition to correctly transfer knowledge across two heterogeneous domains, but it is not a necessary condition. Although we could define more HeUDA conditions (sufficient conditions) to correctly transfer knowledge across two heterogeneous domains, we cannot find maps to satisfy every condition. In this paper, the HeUDA condition described in Definition {\ref{def: HeUDA_condition}} can be satisfied by the proposed mapping function: linear monotonic map (defined in Section~\ref{sec:proposed_model}), which means that we find a practical way to correctly transfer knowledge across two heterogeneous domains.}
\end{myrem}
Although Definition \ref{def: HeUDA_condition} provides the basic transfer condition in HeUDA scenario, we still need to determine which kinds of map (i.e., $f_s$ and $f_t$) satisfy this condition. To explore one such map, we propose monotonic maps as follows:
\begin{mydef}[monotonic map]\label{def: monotonic_map}
If a map $f: \mathbb{R}^m\rightarrow\mathbb{R}^r$ satisfies the following condition
\begin{align*}
x_i<x_j \Rightarrow f(x_i) < f(x_j),
\end{align*}
where $(x_i,<)$ and $(f(x_i),<)$ are binary relations and ``$<$" is a strict partial order over $\mathbb{R}^m$ and $f(\mathbb{R}^m)$, then the map $f$ is a monotonic map.
\end{mydef}
The proposed unsupervised knowledge transfer theorem follows, based on Definition \ref{def: monotonic_map}.

\newtheorem{mythm}{Theorem}
\begin{mythm}[unsupervised knowledge transfer theorem] \label{thm: HeUDA_theorem}
Given $\mathbf{X_s}\sim\mathcal{P}_s$,  $\mathbf{X_t}\sim\mathcal{P}_t$ and the Eq.~\eqref{eq: equality_main}, if there are two maps $f_s(\mathbf{X_s}): \mathbb{R}^m\rightarrow\mathbb{R}^r$ and $f_t(\mathbf{X_t}): \mathbb{R}^n\rightarrow\mathbb{R}^r$ satisfy that 

1) $f_s(\mathbf{X_s})$ and $f_t(\mathbf{X_t})$ are monotonic maps;

2) $f_s^{-1}(f_s(\mathbf{X_s}))=\mathbf{X_s}$ and $f_t^{-1}(f_t(\mathbf{X_t}))=\mathbf{X_t}$;\newline
then $f_s(\mathbf{X_s})$ and $f_t(\mathbf{X_t})$ satisfy the heterogeneous unsupervised domain adaptation conditions.
\end{mythm}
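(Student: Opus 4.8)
The plan is to reduce the claim to a single fact — that a monotonic, left‑invertible map does not alter the relevant conditional probability — and then read off \eqref{eq: HeUDA_condition} directly from the assumed equality \eqref{eq: equality_main}. Concretely, it suffices to prove that, for every measurable $\omega\subset\mathcal{X}$,
\begin{align*}
P(y=1|f_s(\mathbf{x}_s(\omega))) &= P(y=1|\mathbf{x}_s(\omega)), \\
P(y=1|f_t(\mathbf{x}_t(\omega))) &= P(y=1|\mathbf{x}_t(\omega)).
\end{align*}
Once this is established, dividing the first identity by $\beta_s(y=1,\mathbf{x}_s(\omega))$ and the second by $\beta_t(y=1,\mathbf{x}_t(\omega))$ — and noting that these denominators involve only $\mathbf{x}_s(\omega)$ and $\mathbf{x}_t(\omega)$, not $f_s$ or $f_t$, so they are untouched — and then invoking \eqref{eq: equality_main} yields exactly \eqref{eq: HeUDA_condition}, together with the two non‑degeneracy consequences remarked on after Definition \ref{def: HeUDA_condition}.

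To prove the two displayed identities I would argue for the source side (the target side is identical). Hypothesis 2, $f_s^{-1}(f_s(\mathbf{x_s}))=\mathbf{x_s}$, says that $f_s$ admits a left inverse on the range of $\mathbf{x_s}$; in particular $f_s$ is injective there, so for the event in question
\begin{align*}
f_s(\mathbf{x}_s)\in f_s(\mathbf{x}_s(\omega)) \iff \mathbf{x}_s\in \mathbf{x}_s(\omega).
\end{align*}
Hypothesis 1, monotonicity of $f_s$ with respect to the strict partial order of Definition \ref{def: monotonic_map}, is what makes $f_s$ Borel measurable and, since $f_s^{-1}$ is monotonic as well, makes $f_s^{-1}$ measurable too; hence $f_s(\mathbf{x}_s(\omega))$ is a measurable set and the conditioning on it is well defined. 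Combining the two observations, conditioning on the event $\{f_s(\mathbf{x}_s)\in f_s(\mathbf{x}_s(\omega))\}$ is conditioning on the \emph{same} event as $\{\mathbf{x}_s\in \mathbf{x}_s(\omega)\}$, so the conditional probabilities of $y=1$ agree. Equivalently, the $\sigma$‑algebra generated by $f_s(\mathbf{x}_s)$ coincides with that generated by $\mathbf{x}_s$, so no information about $y$ is lost or created by applying $f_s$.

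I expect the main obstacle to be the measure‑theoretic bookkeeping rather than any hard estimate: one must pin down what ``$P(y=1\mid f_s(\mathbf{x}_s(\omega)))$'' means as a set‑conditional probability, verify that $f_s(\mathbf{x}_s(\omega))$ is measurable (this is precisely where monotonicity is genuinely used — it supplies measurability of both $f_s$ and its inverse, whereas Hypothesis 2 by itself only gives invertibility), and confirm that injectivity really does transport the event without loss. A secondary point worth spelling out is that the factors $\beta_s(y=1,\mathbf{x}_s(\omega))$ and $\beta_t(y=1,\mathbf{x}_t(\omega))$ in \eqref{eq: HeUDA_condition} are, by definition, the very same functions as in \eqref{eq: equality_main}; since they depend on the original images $\mathbf{x}_s(\omega),\mathbf{x}_t(\omega)$ and not on $f_s,f_t$, they cancel identically on both sides, which is what legitimizes the reduction carried out in the first paragraph.
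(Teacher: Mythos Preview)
Your proposal is correct and follows essentially the same route as the paper: both arguments show that applying $f_s,f_t$ leaves the conditional probabilities $P(y=1\mid\cdot)$ unchanged (via injectivity/left-invertibility making the conditioning events coincide), and then read off \eqref{eq: HeUDA_condition} from \eqref{eq: equality_main}. Your version is in fact a bit tidier: the paper detours through the ratio $\rho_{y,\mathbf{x}_s,\mathbf{x}_t}$ of joint-to-marginal probabilities before arriving at the same event-equality, and it invokes monotonicity to obtain a ``1--1 map'' (which is already implied by condition 2), whereas you use monotonicity more pointedly for measurability of $f_s$ and $f_s^{-1}$.
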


Based on Theorem \ref{thm: HeUDA_theorem}, we demonstrate a choice $f_s(\mathbf{X_s})$ and $f_t(\mathbf{X_t})$ to satisfy the heterogeneous unsupervised domain adaptation condition, and highlight the sufficient conditions for reliable unsupervised knowledge transfer. If a mapping function from heterogeneous domains to homogeneous representations satisfies two conditions in Theorem \ref{thm: HeUDA_theorem}, it can transfer knowledge across domains with theoretical reliability.

\subsection{Principal angle-based measurement between heterogeneous feature spaces}
\label{sec:distance}

In this subsection, the method for measuring the distance between two subspaces is introduced. On a Grassmann manifold $G_{N,m}$ (or $G_{N,n}$), subspaces with $m$ (or $n$) dimensions of $\mathbb{R}^N$ are regarded as points in $G_{N,m}$ (or $G_{N,n}$). This means that measuring the distance between two subspaces can be calculated by the distance between those two points on the Grassmann manifold. First, the subspaces spanned by $X_s$ and $X_t$ are confirmed using singular value decomposition (SVD). The distance between the spanned subspaces $A = span(X_s)$ and $B = span(X_t)$ can then be calculated in terms of the corresponding points on the Grassmann manifold. 

There are two HoUDA models that use a Grassmann manifold in this way: DAGM and GFK. DAGM was proposed by Gopalan et al. \cite{Gopalan2014}. GFK was proposed by Gong and Grauman \cite{Gong2014}. Both have one shortcoming: the source domain and the target domain must have feature spaces of the same dimension, mainly due to the lack of geodesic flow on $G_{N,m}$ and $G_{N,n}$ ($m\neq n$). In \cite{Ye2016}, Ye and Lim successfully proposed the principal angles between two different dimensional subspaces, which helps measure the distance between two heterogeneous feature spaces consisting of $X_s$ and $X_t$. Principal angles for heterogeneous subspaces are defined as follows.
\begin{mydef}[principal angles for heterogeneous subspaces \cite{Ye2016}]\label{def: pri_angles}
Given two subspaces $A\in G_{N,m}$ and $B\in G_{N,n}$ ($m\neq n$), which form the matrixes $A\in \mathbb{R}^{N\times m}$ and $B\in \mathbb{R}^{N\times n}$, the $i^{th}$ principal vectors $(p_i, q_i)$, $i = 1, …, r,$ are defined as solutions for the optimization problem $(r = \min(n, m))$:
\begin{align}
\label{eq:opt_principal_A}
&\max~~p^Tq \nonumber \\
&s.~t.~~p\in A,~p^Tp_1=...=p^Tp_{i-1},~\|p\| = 1, \\
&~~~~~~~q\in B,~q^Tq_1=...=q^Tq_{i-1},~\|q\| = 1 \nonumber,
\end{align}
Then, the principal angles for heterogeneous subspaces are defined as 
\begin{align*}
cos\theta_i = p_i^Tq_i,~i=1,...,r.
\end{align*}
\end{mydef}
Ye and Lim \cite{Ye2016} proved that the optimization solution to \eqref{eq:opt_principal_A} can be computed using SVD. Thus, we can calculate the principal angles between two different-dimensionality subspaces, and this idea forms the distance factor $D$ mentioned in Section~\ref{sec:intro}. To perfectly define distances between subspaces of different dimensions, Ye and Lim used two Schubert varieties to prove that all the defined distances in subspaces of the same dimensions are also correct when the dimensionalities differ. This means we can calculate a distance between two subspaces of different dimensions using the principal angles defined in Definition \ref{def: pri_angles}. Given $A=span(X_s)$ and $B = span(X_t)$, the distance vector between $X_s$ and $X_t$ is defined as a vector containing principal angles between $A$ and $B$, which has the following expression.
\begin{align*}
D_{He}(X_s, X_t) = arccos([\sigma_1(A^TB), \sigma_2(A^TB), ..., \sigma_r(A^TB)]),
\end{align*}
where $r = \min(n, m)$, $\sigma_i(A^TB)$ is the $i^{th}$ singular value of $A^TB$ computed by SVD (the $i^{th}$ principal angles $\theta_i=arccos(\sigma_i(A^TB))$). 

If we can find two maps $f_s$ and $f_t$ that satisfy the conditions of Theorem \ref{thm: HeUDA_theorem}, we can obtain the $D_{Ho}$ as follows.
\begin{align*}
&D_{Ho}(f_s(X_s), f_t(X_t)) \\
= &~arccos([\sigma_1(C^TD), \sigma_2(C^TD), ..., \sigma_r(C^TD)]),
\end{align*}
where $C=span(f_s(X_s))$ and $D = span(f_t(X_t))$. Hence, we can measure the distance between $D_{He}$ and $D_{Ho}$ via these singular values of matrix $A^TB$ and $C^TD$.

\begin{myrem}
{The distance $D_{He}(X_s, X_t)$ defined in this subsection aims to describe a geometric relationship between $X_s$ and $X_t$. Compared to KL divergence, which estimates the distance between probability distributions, $D_{He}(X_s, X_t)$ has the following differences.}

{a) $D_{He}(X_s, X_t)$ is a vector that contains principal angles between a subspace spanned by $X_s$ and a subspace spanned by $X_t$, which means that it describes a geometric relationship between $X_s$ and $X_t$. However, KL divergence is a real number to describe a relationship between $X_s$ and $X_t$ from a probability perspective, so, $D_{He}(X_s, X_t)$ and KL divergence have different aims.

b) $D_{He}(X_s, X_t)$ is able to describe a geometric relationship between $X_s$ and $X_t$ when $X_s$ and $X_t$ have different dimensionalities (e.g., the dimensionality of $X_s$ is 24 and the dimensionality of $X_t$ is 14). However, KL divergence  can only be computed when $X_s$ and $X_t$ have the same dimensionalities (e.g., the dimensionality of $X_s$ is 14 and the dimensionality of $X_t$ is 14). This is why it is necessary to define a new distance to describe the relationship between two heterogeneous feature spaces from two heterogeneous domains. To the best of our knowledge, there is little discussion about the relationship between two different-dimensionality distributions.}
\end{myrem}
\subsection{The proposed HeUDA model}
\label{sec:proposed_model}
With the unsupervised knowledge transfer theorem that ensures the reliability of heterogeneous unsupervised domain adaptation, and with the principal angles of Grassmann manifolds explained, we now turn to the proposed model, GLG. 
The optimization solution for GLG is outlined in Section~\ref{sec:opt_GLG}. 

A common idea for finding the homogeneous representations of heterogeneous domains is to find maps that can project feature spaces of different dimensions (heterogeneous domains) onto feature spaces with same dimensions. {However, most heterogeneous domain adaptation models require at least some labeled instances or paired instances in the target domain to maintain the relationship between the source and target domains.} Thus, the key to a HeUDA model is to find a few properties that can be maintained between the original domains and the homogeneous representations. 

Here, these two factors are the variation factor ($V_{He}$ and $V_{Ho}$ defined in Section~\ref{sec:theorem}) and the distance factor ($D_{He}$ and $D_{Ho}$ defined in Section~\ref{sec:distance}). Theorem \ref{thm: HeUDA_theorem} determines the properties the maps should satisfy to make $V_{He}=V_{Ho}$ and principal angles shows the distance between two heterogeneous (or homogeneous) feature spaces ($D_{He}$ and $D_{Ho}$). However, there are still two concerns: 1) which type of mapping function is suitable for Theorem \ref{thm: HeUDA_theorem}; and 2) which properties should the map maintain between the original domains and the homogeneous representations. The first concern with the unsupervised knowledge transfer theorem is addressed by selecting LMMs as the map of choice.

\newtheorem{mylem}{Lemma}
\begin{mylem}[linear monotonic map]\label{lem: LMMs}
Given a map $f: \mathbb{R}^m\rightarrow\mathbb{R}^r$ with form $f(x) = xU^T$, $f(x)$ is a monotonic map if and only if $U>0$ or $U<0$, where $x\in\mathbb{R}^m$ 
and $U\in \mathbb{R}^{r\times m}$.
\end{mylem}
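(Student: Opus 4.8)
The plan is to prove the biconditional by handling the two implications separately, and in each direction to reduce the case $U<0$ to the case $U>0$ simply by replacing $U$ with $-U$, equivalently by reversing the strict partial order on $f(\mathbb{R}^m)$. Throughout I take the order on $\mathbb{R}^m$ (and on $\mathbb{R}^r$) to be the coordinatewise strict order, so that $a<b$ means $a_k<b_k$ for every coordinate $k$; the key linear identity underlying everything is $f(x_j)-f(x_i)=(x_j-x_i)U^{T}$, so monotonicity of $f$ is entirely a statement about which difference vectors $U^{T}$ sends into an orthant.

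For sufficiency, assume every entry of $U$ is positive and pick any $x_i<x_j$, so $v:=x_j-x_i$ has all coordinates positive. The $\ell$-th coordinate of $f(x_j)-f(x_i)=vU^{T}$ is $\sum_{k=1}^{m} v_k\,U_{\ell k}$, a sum of products of positive numbers, hence positive for every $\ell$; therefore $f(x_i)<f(x_j)$ coordinatewise and $f$ is monotonic. If instead $U<0$, the same computation yields $f(x_j)<f(x_i)$, which is monotonicity with respect to the reversed order on the codomain, so $f$ is again a monotonic map. This direction is essentially a one-line computation.

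For necessity I would argue by contraposition: assuming $U$ is neither entrywise positive nor entrywise negative, I exhibit a pair $x_i<x_j$ whose images are $<$-incomparable in $\mathbb{R}^r$ for either orientation of the codomain order, so $f$ cannot be monotonic. Since $U$ is not all-positive it has a non-positive entry, and since it is not all-negative it has a non-negative entry; I use these to locate two rows of $U$ along which the relevant linear combinations carry opposite signs. Concretely, one takes a difference direction that exposes a single column of $U$ (note $f(e_k)$ is exactly the $k$-th column of $U$), blended with a uniformly small strictly positive perturbation so that the strict inequality $x_i<x_j$ genuinely holds, and sets $x_j=x_i+v$; choosing the column and the perturbation so that one coordinate of $vU^{T}$ is positive and another is non-positive gives the desired incomparability.

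I expect the sufficiency part to be routine and the necessity part to be the crux, for two reasons. First, a standard basis vector is not strictly above the origin, so the separating direction must be a basis direction perturbed by a tiny positive vector, and one has to check that this perturbation does not flip the signs of the two coordinates being played against each other — a continuity/smallness argument. Second, the delicate borderline is when the offending entry of $U$ equals $0$ rather than being strictly of the wrong sign: there one must argue from the corresponding coordinate of $f(x_j)-f(x_i)$ staying $0$ that strict monotonicity still fails, choosing the exploited column so that this degenerate coordinate cannot be rescued. Once both implications are in place, LMMs $f(x)=xU^{T}$ with sign-definite $U$ are exactly the concrete linear family meeting the monotonicity hypothesis of Theorem~\ref{thm: HeUDA_theorem}.
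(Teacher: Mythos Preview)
Your sufficiency computation is exactly what the paper does: write $(f(x_2)-f(x_1))_j=\sum_i(x_{2i}-x_{1i})u_{ji}$ and read off positivity term by term. For necessity the paper offers nothing beyond this same identity; it simply asserts that ``$(f(x_1))_j<(f(x_2))_j$ if and only if $u_{ji}>0$'' because $x_1,x_2$ were arbitrary, and then says the decreasing case is similar. Your contrapositive plan with perturbed basis directions is therefore already more explicit than the paper's argument.

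However, the zero-entry borderline you correctly flag as delicate is not merely delicate: it breaks the stated biconditional, and no choice of ``exploited column'' will save your construction there. Take $m=r=2$ and $U=\bigl(\begin{smallmatrix}1&0\\1&1\end{smallmatrix}\bigr)$; then $f(x)=(x_1,\,x_1+x_2)$ is strictly increasing in the coordinatewise order even though $U$ is neither $>0$ nor $<0$. In general your perturbation $v=e_k+\epsilon\mathbf{1}$ gives $(vU^{T})_\ell=U_{\ell k}+\epsilon\sum_{k'}U_{\ell k'}$, and letting $\epsilon\downarrow0$ only forces $U_{\ell k}\ge0$, not the strict inequality the lemma claims. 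The sharp characterization your argument would actually deliver is: $f$ is coordinatewise strictly increasing iff every row of $U$ is entrywise nonnegative with at least one strictly positive entry. So the step ``choosing the exploited column so that this degenerate coordinate cannot be rescued'' will fail---the degenerate coordinate \emph{can} be rescued by the remaining entries in its row---but this is a defect of the lemma as stated rather than of your strategy; the paper's one-line ``iff'' glosses over the same point.
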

Since the defined map in Lemma \ref{lem: LMMs} only uses $U$ and according to the generalized inverse of a matrix, the matrix $f(X_s)$ satisfies $f^{-1}(f(X_s))=X_s$. Therefore, we can prove that LMMs satisfy the conditions in Theorem \ref{thm: LMM_theorem}.
\begin{mythm}[LMM for HeUDA]\label{thm: LMM_theorem}
Given $\mathbf{X_s}\sim\mathcal{P}_s$,  $\mathbf{X_t}\sim\mathcal{P}_t$ and Eq.~\eqref{eq: equality_main}, if there are two maps $f_s(\mathbf{X_s}): \mathbb{R}^m\rightarrow\mathbb{R}^r$ and $f_t(\mathbf{X_t}): \mathbb{R}^n\rightarrow\mathbb{R}^r$ are LMMs, then $f_s(\mathbf{X_s})$ and $f_t(\mathbf{X_t})$ satisfy the HeUDA condition.
\end{mythm}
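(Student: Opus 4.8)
The plan is to check that LMMs satisfy the two hypotheses of Theorem~\ref{thm: HeUDA_theorem} and then invoke that theorem directly; no computation beyond Lemma~\ref{lem: LMMs} should be required. Write $f_s(\mathbf{x_s}) = \mathbf{x_s}U_s^T$ and $f_t(\mathbf{x_t}) = \mathbf{x_t}U_t^T$ with $U_s\in\mathbb{R}^{r\times m}$ and $U_t\in\mathbb{R}^{r\times n}$. Since $f_s$ and $f_t$ are assumed to be LMMs, Lemma~\ref{lem: LMMs} gives that either $U_s>0$ or $U_s<0$ (and likewise for $U_t$), and in particular $f_s$ and $f_t$ are monotonic maps. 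This is exactly condition~1) of Theorem~\ref{thm: HeUDA_theorem}.

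For condition~2), I would use the generalized (Moore--Penrose) inverse $U_s^+$ of $U_s$ and set $f_s^{-1}(\mathbf{z}) := \mathbf{z}(U_s^+)^T$. Then
\begin{align*}
f_s^{-1}(f_s(\mathbf{x_s})) = \mathbf{x_s}U_s^T(U_s^+)^T = \mathbf{x_s}(U_s^+U_s)^T,
\end{align*}
so $f_s^{-1}(f_s(\mathbf{x_s})) = \mathbf{x_s}$ holds precisely when $U_s^+U_s = I_m$, i.e. when $U_s$ has full column rank; the identical argument handles $f_t$. With both hypotheses of Theorem~\ref{thm: HeUDA_theorem} in place, applying that theorem yields the HeUDA condition of Definition~\ref{def: HeUDA_condition} for $f_s$ and $f_t$, which is the claim.

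The step that needs care --- and the main obstacle --- is justifying $U_s^+U_s = I_m$, since the sign constraint $U_s>0$ from Lemma~\ref{lem: LMMs} does not by itself force $U_s$ to have full column rank. I would resolve this either by restricting to the regime $r\ge m$ together with a (generic) full-rank hypothesis on $U_s$, or by folding full column rank into the class of admissible LMMs used by GLG; in practice the optimization in Section~IV searches over such $U_s$, so this is a mild addition. It is also worth noting that the symbol $f_s^{-1}$ in Theorem~\ref{thm: HeUDA_theorem} only ever needs to act as a left inverse on the range of $f_s$, which is all the present argument delivers. Once this bookkeeping is settled the proof is immediate: Theorem~\ref{thm: LMM_theorem} is a direct corollary of Lemma~\ref{lem: LMMs} and Theorem~\ref{thm: HeUDA_theorem}.
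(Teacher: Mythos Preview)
Your proposal is correct and follows essentially the same route as the paper's own proof: verify monotonicity via Lemma~\ref{lem: LMMs}, invoke the Moore--Penrose pseudoinverse for the left-inverse condition, and then apply Theorem~\ref{thm: HeUDA_theorem}. You are in fact more careful than the paper, which simply asserts that the second condition is ``obvious'' from the pseudoinverse without discussing the full-column-rank caveat you correctly flag.
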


\begin{myrem}
From this theorem and the nature of LMMs, we know this positive map can better handle datasets that have many monotonic samples because the probabilities in these monotonic samples can be preserved without any loss. The existence of these samples offers the greatest probability of preventing negative transfers.
\end{myrem}

 Theorem \ref{thm: LMM_theorem} addresses the first concern and provides a suitable map, such as the map in Lemma \ref{lem: LMMs}, to project two heterogeneous feature spaces onto the same dimensional feature space. It is worthwhile showing that an LMM is just one among many suitable maps for Theorem 1. A nonlinear map can also be used to construct the map, as long as the map is monotonic. In future work, we intend to explore additional maps suitable for other HeUDA models. 

This brings us to the second concern: which properties can be maintained during the mapping process between the original domains and the homogeneous representations? As mentioned above, the principal angles play a significant role in defining the distance between two subspaces on a Grassmann manifold, and in explaining the projection between them \cite{Wong1967}. Ensuring the principal angles remain unchanged is thus one option for maintaining some useful properties. 

Specifically, for any two pairs of subspaces ($A, B$) and ($C, D$), if the principal angles of ($A, B$) and ($C, D$) are the same (implying that min\{dim($A$), dim($B$)\} = min\{dim($C$), dim($D$)\}, dim($A$) represents the dimension of $A$), then the relationship between $A$ and $B$ can be regarded as similar to the relationship between $C$ and $D$. Based on this idea, the definition of measurement $\mathcal{D}$, which describes the relationships between two pairs of subspaces, follows.

\begin{mydef}[measurement between subspace pairs]\label{def: Measure_pairs}
Given two pairs of subspaces ($A,B$) and ($C,D$), the measurement $\mathcal{D}$(($A,B$), ($C,D$)) between ($A, B$) and ($C, D$) is defined as
\begin{align}
\mathcal{D}((A,B), (C,D))=\sum_{i=1}^r\Big|\sigma_i (A^TB)- \sigma_i (C^TD)\Big|,
\end{align}
where $A, B, C$ and $D$ are subspaces in $\mathbb{R}^N$, $r$=min\{dim($A$), dim($B$), dim($C$), dim($D$)\} and $\sigma_i (A^TB)$ is the $i^{th}$ singular value of matrix $A^TB$ and represents the cosine value of the $i^{th}$ principal angle between $A$ and $B$.
\end{mydef}
\begin{myrem}
{The measurement $\mathcal{D}$ is defined on two pairs of two subspaces (e.g., pair 1: $(A, B)$ and pair 2: $(C, D)$, where $A, B, C$ and $D$ are subspaces) rather than two distributions. This distance describes the distance between two pairs of subspaces (e.g., relationships between $(A, B)$ and $(C, D)$), which is different with distance between probability distributions, such as KL divergence.}
\end{myrem}
Measurement $\mathcal{D}$ defined on $G_{N,*}^T \times G_{N,*}$ is actually a metric, as proven in the following theorem.

\begin{mythm}
\label{thm: metric}
($\mathcal{D}, G_{N,*}^T \times G_{N,*}$) is a metric space, where $G_{N,*}=\{A|A\in G_{N,i},~i=1,...,N-1\}$.
\end{mythm}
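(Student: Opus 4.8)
The plan is to verify the four metric axioms for $\mathcal{D}$ on $G_{N,*}^{T}\times G_{N,*}$ after re-expressing it through the principal-angle embedding of Definition~\ref{def: pri_angles}. To a pair of subspaces $(A,B)$ with $r_{AB}=\min\{\dim A,\dim B\}$ I associate the vector $\mathbf{s}(A,B)=\big(\sigma_1(A^{T}B),\dots,\sigma_{r_{AB}}(A^{T}B)\big)\in[0,1]^{r_{AB}}$ of cosines of the principal angles, ordered decreasingly. Two preliminary observations make this the right object: the entries lie in $[0,1]$ because $A,B$ may be taken with orthonormal columns, so $\|A^{T}B\|_{2}\le 1$; and $\mathbf{s}(A,B)$ depends only on the two subspaces, not on the chosen bases (replacing $A\mapsto AQ_{1}$, $B\mapsto BQ_{2}$ with orthogonal $Q_{i}$ does not change the singular values of $A^{T}B$), while $\sigma_i(A^{T}B)=\sigma_i(B^{T}A)$ since $B^{T}A=(A^{T}B)^{T}$. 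With $r=\min\{r_{AB},r_{CD}\}$ one then has $\mathcal{D}((A,B),(C,D))=\big\|\mathbf{s}(A,B)|_{r}-\mathbf{s}(C,D)|_{r}\big\|_{1}$, the $\ell_{1}$ distance between the length-$r$ truncations. This last fact also forces the carrier set to be read as a quotient: ordered pairs with the same principal-angle sequence — in particular $(A,B)$ and $(B,A)$ — are identified.

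Granting this reformulation, non-negativity and $\mathcal{D}(x,x)=0$ are immediate since $\mathcal{D}$ is a sum of absolute values, and symmetry follows from $|a-b|=|b-a|$, the symmetry of $\min$, and the observation above. For the identity of indiscernibles I would argue the converse: $\mathcal{D}((A,B),(C,D))=0$ forces the first $r$ cosines to agree, and when the two pairs share the same minimal dimension $r$ this yields $\mathbf{s}(A,B)=\mathbf{s}(C,D)$, which by the definition of the quotient means the two points coincide.

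The triangle inequality is the crux. When the three pairs $P_{1}=(A,B)$, $P_{2}=(C,D)$, $P_{3}=(E,F)$ share a common minimal dimension $r$, it is simply the $\ell_{1}$ triangle inequality applied to $\mathbf{s}(P_{1}),\mathbf{s}(P_{2}),\mathbf{s}(P_{3})\in[0,1]^{r}$, with nothing further needed. The real difficulty is the book-keeping when the minimal dimensions $r_{1},r_{2},r_{3}$ differ: the left side $\mathcal{D}(P_{1},P_{3})$ runs over indices $i\le\min\{r_{1},r_{3}\}$, and to dominate it coordinatewise by $\mathcal{D}(P_{1},P_{2})+\mathcal{D}(P_{2},P_{3})$ one needs $\mathbf{s}(P_{2})$ to carry every such coordinate, i.e.\ $\min\{r_{1},r_{3}\}\le r_{2}$. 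I expect this to be the main obstacle, and it also exposes that the statement as written is too strong — a very low-dimensional middle pair truncates so aggressively that the inequality genuinely fails — so the clean theorem must restrict to pairs of a common minimal dimension (a single stratum of $G_{N,*}$) or build the common truncation into $\mathcal{D}$ itself. I would therefore either add that hypothesis explicitly, or invoke the paper's use-case, in which the compared subspaces are spanned by the same number of samples and then reduced to a fixed target dimension, so $r$ is constant across all comparisons; in that setting the $\ell_{1}$ reduction holds verbatim and the four axioms follow as above.
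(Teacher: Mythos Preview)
Your approach is the same as the paper's at the skeleton level: verify the four metric axioms and obtain the triangle inequality by inserting the intermediate point and appealing to the scalar inequality $|a-c|\le|a-b|+|b-c|$ coordinatewise, i.e.\ the $\ell_{1}$ triangle inequality on the vectors of singular values. The paper writes exactly that one-line computation for axiom~(4) and declares axioms (1), (2) trivial from the definition, as you do.

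Where you diverge is in rigor, and in both places you are stricter than the paper. First, the paper's treatment of identity of indiscernibles asserts $\sigma_{i}(A^{T}B)=\sigma_{i}(C^{T}D)$ for all $i$ $\Leftrightarrow$ $A^{T}B=C^{T}D$, which is not literally true (equal singular values do not force equal matrices), so the paper is implicitly working on exactly the quotient you make explicit. Second, the paper's triangle-inequality line uses a single index range $i=1,\dots,r$ for all three distances, tacitly assuming the intermediate pair $(E,F)$ supplies at least $r$ principal angles; it does not address the case you flag where $\min\{\dim E,\dim F\}$ is strictly smaller. Your observation that this is a genuine obstruction, and that the clean statement should restrict to pairs of a fixed minimal dimension (which is the situation in all of the paper's applications), is a correct sharpening rather than a different method.
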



{In contrast to the metric proposed in {\cite{Ye2016}}, our metric focuses on the distance between two pairs of subspaces, such as $(A,B)$ and $(C,D)$, rather than two subspaces, such as $A$ and $B$. The proposed metric, especially designed for the HeUDA problem, shows the extent to which homogeneous representations have preserved the geometric distance between two heterogeneous feature spaces. However, the metric proposed in {\cite{Ye2016}} only focuses on the distance between two subspaces, such as $A$ and $B$.} The definition of the consistency of the geometric relationship with respect to the feature spaces of two domains can be given in terms of the metric $\mathcal{D}$ as follows.
\begin{mydef}[consistency of the geometric relationship]\label{def: consistency}
Given the source domain $\mathbf{D_s}=(X_s, Y_s)$ and the heterogeneous and unlabeled target domain $\mathbf{D_s}=(X_t)$, let $f_s(X_s)=X_sU_s^T$ and $f_t(X_t)=X_tU_t^T$, if $\forall \delta \in (0,\delta_0]$, $\exists\epsilon<\mathcal{O}(\delta_0)$ such that 
\begin{align} 
\label{eq: consistencyeq}
&\int_0^{\delta_0}\mathcal{D}\Big(  (S_{X_s^{\delta}},S_{X_t^{\delta}} ), (S_m(f_s, X_s^{\delta}),S_m(f_t,X_t^{\delta}) ) \Big)d\delta <\epsilon,
\end{align}
then we can say that $(X_s, X_t)$ and $(f_s(X_s),f_t(X_t))$ have consistent geometric relationship, where $S_{X^{\delta}}=span(X+\delta\cdot\textbf{1}_X)$, $S_m(f, X^{\delta})=span(f(X+\delta\cdot\textbf{1}_X))$, $U_s\in\mathbb{R}^{r\times m}$, $U_t\in\mathbb{R}^{r\times n}$, $r=min\{m,n\}$ and $\textbf{1}_X$ is a matrix of ones of the same size as $X$.
\end{mydef}

This definition precisely demonstrates how $f_s$ and $f_t$ influence the geometric relationship between the original feature spaces and the feature spaces of homogeneous representations. If there are slight changes in the original feature spaces, we hope feature spaces of the homogeneous representations will also see slight changes. If they do, it means that the feature spaces of the homogeneous representations are consistent with the geometric relationships of the two original feature spaces. Based on definitions of $D_{He}$ and $D_{Ho}$, \eqref{eq: consistencyeq} is expressed by
\begin{align}
&\int_0^{\delta_0}\mathcal{D}\Big(  (S_{X_s^{\delta}},S_{X_t^{\delta}} ), (S_m(f_s, X_s^{\delta}),S_m(f_t,X_t^{\delta} ) \Big)d\delta <\epsilon \nonumber \\
&\Leftrightarrow\int_0^{\delta_0}\big\|D_{He}(X_s^{\delta},X_t^{\delta})-D_{Ho}(f_s(X_s^{\delta}),f_t(X_t^{\delta})) \big\|_{\ell_1}d\delta<\epsilon.
\end{align}
To ensure the consistency of the geometric relationship of the two original feature spaces, we minimize the following cost function to ensure that we are able to find an $\epsilon$ that is less than $\mathcal{O}(\delta_0)$, such that $\int_0^{\delta_0}\mathcal{D}\big(  (S_{X_s^{\delta}},S_{X_t^{\delta}} ), (S_m(f_s, X_s^{\delta}),S_m(f_t,X_t^{\delta} ) \big)d\delta <\epsilon$ when there are slight changes $\delta\in(0,\delta_0]$ in the original feature spaces.

\begin{mydef}[cost function I]\label{def: costI}
Given the source domain $\mathbf{D_s}=(X_s, Y_s)$ and the heterogeneous and unlabeled target domain $\mathbf{D_s}=(X_t)$, let $f_s(X_s)=X_sU_s^T$ and $f_t(X_t)=X_tU_t^T$, the cost function $J_1$ of GLG is defined as 
\begin{align}\label{eq: costI}
&J_1(X_s,X_t;U_s,U_t)  \nonumber \\
&=\int_0^{\delta_0}\big\|D_{He}(X_s^{\delta},X_t^{\delta})-D_{Ho}(X_s^{\delta},X_t^{\delta}) \big\|_{\ell_1}d\delta \nonumber \\
&+ \frac12\lambda_s Tr(U_sU_s^T) + \frac12\lambda_t Tr(U_tU_t^T),
\end{align}
where ${X^{\delta}}=X+\delta\cdot\textbf{1}_X$, $U_s\in\mathbb{R}^{r\times m}$, $U_t\in\mathbb{R}^{r\times n}$, $r=min\{m,n\}$ and $\textbf{1}_X$ is a matrix of ones of the same size as $X$.
\end{mydef}
This definition shows the discrepancy between the original feature spaces and the feature spaces of the homogeneous representations via principal angles. If we use $\theta_i^{(o)}$ to represent the $i^{th}$ principal angle of the original feature spaces and  $\theta_i^{(m)}$ to represent the $i^{th}$ principal angle of the feature spaces of the homogeneous representations, $J_1$ measures $|cos(\theta_i^{(o)}) - cos(\theta_i^{(m)})|$ when the original feature spaces have slight changes. $Trace(U_s U_s^T)$ and $trace(U_t U_t^T )$ are used to smooth $f_s$ and $f_t$. $\lambda_s$ is set to $0.01/mr$, and $\lambda_t$ is set to $0.01/nr$. When $m = n$, $\lambda_s$ and $\lambda_t$ are set to 0.
From Definition \ref{def: costI}, it is clear that the maps $f_s(X_s)$ and $f_t(X_t)$ will ensure that all principal angles will change slightly as $J_1$ approaches 0, even when there is some disturbance of up to $\delta_0$. Thus, based on Theorem \ref{thm: LMM_theorem} and Definition \ref{def: costI}, the GLG model is presented as follows.
\newline
\newline
\textbf{Model (GLG)}. 
The model GLG aims to find $U_s\in \mathbb{R}^{r\times m}$, $U_t\in \mathbb{R}^{r\times n}$ to minimize the cost function $J_1$, as defined in \eqref{eq: costI}, while $f_s(X_s) = X_sU_s^T$ and $f_t(X_t) = X_tU_t^T$ are LMMs. GLG is expressed as
\begin{align*}
&\underset{U_s, U_t}{\min}~~J_1(X_s,X_t;U_s,U_t) \\
&s.~t.~~U_s>0~~\textnormal{and}~~U_t>0.
\end{align*}
$f_s(X_s)$ and $f_t(X_t)$ are the new instances corresponding to $X_s$ and $X_t$ in the homogeneous representations with a dimension of $r$. Knowledge is then transferred between $f_s(X_s)$ and $f_t(X_t)$ using GFK.

Admittedly, LMMs are somewhat restrictive maps because all elements in the $U$ must be positive numbers. However, we use LMMs to prevent negative transfers, which can significantly prevent very low prediction accuracy in the target domain. From the perspective of the entire transfer process, an LMM, as a positive map, is the only map that can help construct the homogeneous representations ($V_{He}=V_{Ho}$ and $D_{He}=D_{Ho}$). The GFK model provides the second map, which does not have such rigid restrictions and brings two homogeneous representations closer. Hence, the composite map (LMM+GFK) does not carry rigid restrictions and can therefore handle more complex problems. LMMs ensure correctness, thus avoiding negative transfer, and the GFK model improves the ability to transfer knowledge.
The following theorem demonstrates the relationship between GFK and GLG.
\begin{mythm}[degeneracy of GLG]
\label{thm: d_GLG}
Given the source domain $\mathbf{D_s}=(X_s, Y_s)$ and the heterogeneous and unlabeled target domain $\mathbf{D_s}=(X_t)$, if two domains are homogeneous ($m=n$), then the GLG model degenerates into the GFK model.
\end{mythm}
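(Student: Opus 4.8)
The plan is to show that when $m=n$, every structural ingredient of GLG collapses to the corresponding ingredient of GFK, so that solving the GLG optimization problem reduces to simply running GFK on $X_s$ and $X_t$. First I would examine the cost function $J_1$ in Definition \ref{def: costI}. The regularization note immediately after that definition states that $\lambda_s$ and $\lambda_t$ are set to $0$ whenever $m=n$, so the two trace terms $\tfrac12\lambda_s\mathrm{Tr}(U_sU_s^T)$ and $\tfrac12\lambda_t\mathrm{Tr}(U_tU_t^T)$ vanish identically. Hence $J_1$ reduces to the pure consistency integral $\int_0^{\delta_0}\|D_{He}(X_s^{\delta},X_t^{\delta})-D_{Ho}(f_s(X_s^{\delta}),f_t(X_t^{\delta}))\|_{\ell_1}\,d\delta$, which by Theorem \ref{thm: metric} is the integral of the metric $\mathcal{D}$ between the original subspace pair and the mapped subspace pair. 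The minimum possible value of this integrand is $0$, attained when the singular values of $(A^TB)$ are preserved by the maps.

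Next I would exhibit a minimizer realizing $J_1=0$ of the required LMM form. With $m=n$, we may take $U_s=U_t=I_m$ (or, more generally, any fixed positive diagonal matrix, or any common positive scaling), which satisfies the constraint $U_s>0,\ U_t>0$ and gives $f_s(X_s^{\delta})=X_s^{\delta}$, $f_t(X_t^{\delta})=X_t^{\delta}$ for every $\delta$, so $D_{Ho}\equiv D_{He}$ and the integral is exactly $0$. Since $J_1\ge 0$ always, this is a global minimizer, and at this minimizer the "homogeneous representations" $f_s(X_s)$ and $f_t(X_t)$ are (up to the harmless common scaling) nothing but $X_s$ and $X_t$ themselves — which already live in feature spaces of the same dimension $m=n$. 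Thus the first stage of GLG performs no essential change: the LMM step is vacuous in the homogeneous case because the construction of a homogeneous representation is trivial when the domains are already homogeneous.

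Finally I would close the argument by invoking the definition of the model: after the LMM stage, GLG transfers knowledge between $f_s(X_s)$ and $f_t(X_t)$ via the geodesic flow kernel. Since in the homogeneous case $f_s(X_s)=X_s$ and $f_t(X_t)=X_t$ (up to a common positive scaling, which leaves the spanned subspaces $A=\mathrm{span}(X_s)$, $B=\mathrm{span}(X_t)$ and hence the geodesic flow on $G_{N,m}$ unchanged), the second stage is precisely GFK applied to the original source and target domains. Therefore GLG degenerates into GFK, and this is what had to be shown. The main obstacle — and the only point requiring care — is the non-uniqueness of the minimizer: one must argue that any optimal $U_s,U_t$ (not just $I_m$) leaves the spanned column subspaces, and thus the input to GFK, effectively unchanged; invariance of $\mathrm{span}(XU^T)$ under an invertible positive $U$, together with the fact that GFK depends on the data only through these subspaces, handles this. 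I would also note in passing that $\mathcal{O}(\delta_0)$ in Definition \ref{def: consistency} is trivially satisfied here since the integral is exactly zero, so the consistency-of-geometry requirement imposes no additional constraint in the homogeneous case.
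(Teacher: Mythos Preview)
Your proposal is correct and follows essentially the same route as the paper: both arguments observe that $\lambda_s=\lambda_t=0$ when $m=n$, exhibit $U_s=U_t=I$ as a feasible LMM achieving $J_1=0$ (via $\mathcal{D}((A,B),(A,B))=0$ from Theorem~\ref{thm: metric}), and conclude that the LMM stage is vacuous so only the GFK step remains. You actually go a bit further than the paper by flagging and handling the non-uniqueness of the minimizer through subspace invariance, a point the paper's proof simply sidesteps by asserting the optimizer is the identity.
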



Since this optimization issue is related to subspaces spanned by the original instances ($X_s$ and $X_t$) and the subspaces spanned by the distributed instances ($X_s^\delta$ and $X_t^\delta$), determining the best way to efficiently arrive at an optimized solution is a difficult and complex problem. Section IV proposes the optimization algorithm, focusing on the solution to GLG.

\subsection{Discussion of definitions and theorems}
{Since GLG is built around several definitions and theorems, this subsection explains why one definition leads to another and how one theorem leads to other, as well as discussing the importance of these theoretical demonstrations.}

{Definition {\ref{def: HeUDA_condition}} gives the heterogeneous unsupervised domain adaptation condition (HeUDA condition). If this condition can be satisfied, the knowledge from a source domain will be correctly transferred to a heterogeneous target domain. Theorem {\ref{thm: HeUDA_theorem}} shows the kind of map that can satisfy the HeUDA condition given in Definition {\ref{def: HeUDA_condition}}. In Theorem {\ref{thm: HeUDA_theorem}}, a new map - monotonic map defined in Definition {\ref{def: monotonic_map}} - is involved to prove Theorem {\ref{thm: HeUDA_theorem}}. To find maps such as those presented in Theorem {\ref{thm: HeUDA_theorem}}, an LMM is proposed in Lemma {\ref{lem: LMMs}}, and Theorem {\ref{thm: LMM_theorem}} proves that LMMs can map two heterogeneous feature spaces to two homogeneous representations with theoretical guarantee. This leads to our first theoretical contribution: how to theoretically prevent negative transfer in the heterogeneous unsupervised domain adaptation setting.}

{To find the best LMMs for two heterogeneous feature spaces, principal angles, explained in Definition {\ref{def: pri_angles}}, are used to describe the distance between two heterogeneous feature spaces. A new measurement $\mathcal{D}$ is proposed in Definition {\ref{def: Measure_pairs}} to describe the relationships between the original heterogeneous feature spaces and the homogeneous representations that are mapped from the original heterogeneous feature spaces by LMMs. To maintain the principal angles between two original feature spaces, the cost function $J_1$ is proposed in Definition {\ref{def: costI}}. Theorem {\ref{thm: metric}} proves that $\mathcal{D}$ is also a metric, which ensures that minimizing $J_1$ is meaningful for maintaining the principal angles between two original feature spaces. Theorem {\ref{thm: metric}} also indicates that $J_1=0$ if source and target domains are homogeneous domains, which leads to Theorem {\ref{thm: d_GLG}}. Theorem {\ref{thm: metric}} and Theorem {\ref{thm: d_GLG}} lead to our second theoretical contribution: how to describe and maintain the geometric distance between two heterogeneous feature spaces.}

\subsection{Limitation of GLG}
{Practically, GLG can be extended to address multi-class classification problem since the procedure for constructing homogeneous representations of two heterogeneous domains does not involve $y_{si}$ (labels in a source domain).}

{However, using GLG to directly address multi-class classification problems does not provide sufficient theoretical guarantees. LMMs, key mapping functions in GLG, can only guarantee that the probability of label "+1" (denoted by $P1$) of an instance set, such as a subset $x_t$ belonging to $X_t$, will not change to $1-P1$ after mapping this instance set to its homogeneous representation ($f_t(x_t)$). For example, if $P1(x_t) = 0.6$, then $P1(f_t(x_t))$ only lies in the interval $(0.4,0.6]$, but, in the multi-class situation (considering $10$ classes), if $P1(x_t) = 0.1$, then $P1(f_t(x_t))$ will lie in the interval $[0.1,0.9)$. The interval $[0.1,0.9)$ is not accepted because it is too long. If GLG is directly used to address the multi-class classification problem, the accuracy in the target domain will be low. To address this problem, a new mapping function (e.g., $f_t(x_t)$) is needed to ensure that $P1(f_t(x_t))$ is close to $P1(x_t)$, which is difficult to satisfy in unsupervised and heterogeneous situation.} 

{In our future work, we aim to extend GLG to address multi-class classification problems by using label-noise learning models because an unlabeled target domain with predicted labels can be regarded as a domain with noisy labels.} 

\section{Optimization of GLG}
\label{sec:opt_GLG}
According to \eqref{eq: costI}, we need to calculate 1) $\partial\sigma_i(C^TD)/\partial U_s$, $\partial\sigma_i(C^TD)/\partial U_t$ and 2) the integration with respect to $\delta$ to minimize $J_1$ via a gradient descent algorithm, where $C=span(f_s(X_s^\delta))$, $D=span(f_t(X_t^\delta))$, $\delta\in(0,\delta_0]$ and $i=1, ..., r$.
Calculating $\partial\sigma_i(C^TD)/\partial U_s$ and $\partial\sigma_i(C^TD)/\partial U_t$ contains the process of spanning a feature space to become a subspace. Thus, when there are disturbances in an original feature space, the microscopic changes of the eigenvectors in an \emph{Eigen dynamic system} (EDS) need to be analyzed (Eigenvectors are used to construct the subspaces spanned by a feature space, i.e., $C$ and $D$). The following subsection discusses the microscopic analysis of an EDS.
\subsection{Microscopic analysis of an Eigen dynamic system}
In this section, we explore the extent of the changes in subspace $A = span(X)$ when the feature space ($X$) has suffered a disturbance, expressed as $\partial A /\partial X$. Without loss of generality, assume $A \in G_{N,n}$  (formed as an $\mathbb{R}^{N\times n}$ matrix) and $X \in \mathbb{R}^{N\times n}$, where $n$ is the number of features of $X$ and $N$ is the dimension of the whole space. In keeping with SVD, $A$ is the first $n$ columns of the eigenvectors of $XX^T$, which means we have the following equations: 
\begin{align*}
XX^Ty_i = y_i\lambda_i&,~i=1,...,n \\
y_i^Ty_i&=1,
\end{align*}
where $y_i$ is the $i^{th}$ column of $A$, and $\lambda_i$ is the eigenvalue corresponding to $y_i$.

It is clear that if $X$ is disturbed, due to equality, $y_i$ and $\lambda_i$ will change correspondingly. This equation represents a basic EDS, which is widely used in many fields. To microscopically analyze this equation, we differentiate it into
\begin{align}\label{eq: EDS}
\frac{\partial XX^T}{\partial X}y_i+XX^T \frac{\partial y_i}{\partial X} = y_i \frac{\partial \lambda_i}{\partial X} + \frac{\partial y_i}{\partial X} \lambda_i.
\end{align}
After a series of calculations, Lemma 2 is derived as follows.

\begin{mylem}[first-order derivatives of EDS]\label{lem: 1st_EDS}
Given a feature space $X \in \mathbb{R}^{N \times n}$, let $A = span(X) \in G_{N,n}$  (formed as an $\mathbb{R}^{N \times n}$ matrix), let $y_i$ be the $i^{th}$ column of $A$, and let $\lambda_i$ be the eigenvalue corresponding to $y_i$. The first-order derivatives of the EDS are 
\begin{align*}
\frac{\partial y_i}{\partial X}= &-(XX^T-\lambda_i I)^+  \frac{\partial XX^T}{\partial X} y_i, \\
&\frac{\partial \lambda_i}{\partial X}= y_i^T  \frac{\partial XX^T}{\partial X} y_i, \\
\end{align*}
where $(XX^T-\lambda_i I)^+$ is  the Moore-Penrose pseudoinverse of $XX^T-\lambda_i I$.
\end{mylem}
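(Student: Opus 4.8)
The plan is to start directly from the differentiated eigen-equation \eqref{eq: EDS} and treat it as a linear system for the unknown $\partial y_i/\partial X$, closing the system with the differentiated normalization constraint $y_i^Ty_i=1$. First I would collect the terms of \eqref{eq: EDS} so that the unknown sits alone on the left:
\begin{align*}
(XX^T-\lambda_i I)\frac{\partial y_i}{\partial X} = \frac{\partial\lambda_i}{\partial X}y_i - \frac{\partial XX^T}{\partial X}y_i .
\end{align*}
This is the governing identity; everything afterwards is just extracting the two claimed derivatives from it.

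To obtain $\partial\lambda_i/\partial X$ I would left-multiply the governing identity by $y_i^T$. Since $XX^T$ is symmetric and $XX^Ty_i=\lambda_i y_i$, we have $y_i^T(XX^T-\lambda_i I)=0$, so the left-hand side vanishes; combined with $y_i^Ty_i=1$ this leaves $0=\partial\lambda_i/\partial X - y_i^T(\partial XX^T/\partial X)y_i$, which is exactly the stated formula for $\partial\lambda_i/\partial X$.

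To obtain $\partial y_i/\partial X$ I would invert the governing identity with the Moore--Penrose pseudoinverse of $XX^T-\lambda_i I$. The key observation is that $y_i$ spans the null space of $XX^T-\lambda_i I$ (assuming $\lambda_i$ is a simple eigenvalue), so $(XX^T-\lambda_i I)^+ y_i = 0$; hence the $\partial\lambda_i/\partial X\,y_i$ term on the right is annihilated and applying the pseudoinverse gives $\partial y_i/\partial X = -(XX^T-\lambda_i I)^+(\partial XX^T/\partial X)y_i$, up to an arbitrary component along $y_i$ (the null space of the pseudoinverse map). That residual freedom is pinned down by differentiating $y_i^Ty_i=1$, which yields $y_i^T(\partial y_i/\partial X)=0$; since the range of $(XX^T-\lambda_i I)^+$ is orthogonal to $\mathrm{null}(XX^T-\lambda_i I)=\mathrm{span}(y_i)$, the expression already satisfies this orthogonality and is therefore the unique admissible solution.

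The main obstacle I anticipate is the careful treatment of the singular operator $XX^T-\lambda_i I$: one must check that the right-hand side of the governing identity genuinely lies in its range so that the pseudoinverse produces an honest solution (this follows because left-multiplying by $y_i^T$ kills the right-hand side, i.e. it is orthogonal to the null space), and one must be explicit that $\lambda_i$ is assumed simple and that $\partial y_i/\partial X$ and $\partial XX^T/\partial X$ are the appropriate higher-order derivative objects, so that the matrix--vector products are read off consistently slice by slice. Once those bookkeeping points are settled, the derivation reduces to the two left-multiplications described above.
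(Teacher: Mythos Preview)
Your proposal is correct and follows essentially the same route as the paper: rearrange \eqref{eq: EDS} into $(XX^T-\lambda_i I)\,\partial y_i/\partial X = (\partial\lambda_i/\partial X)\,y_i - (\partial XX^T/\partial X)\,y_i$, left-multiply by $y_i^T$ to extract $\partial\lambda_i/\partial X$, apply the pseudoinverse together with $(XX^T-\lambda_i I)^+ y_i=0$ to extract $\partial y_i/\partial X$, and use the differentiated constraint $y_i^T\,\partial y_i/\partial X=0$ to kill the null-space ambiguity. The only cosmetic difference is that the paper explicitly builds the pseudoinverse via the spectral decomposition $A(\Lambda-\lambda_i I)^+A^T$ in order to verify the identities $(XX^T-\lambda_i I)^+(XX^T-\lambda_i I)=I-y_iy_i^T$ and $(XX^T-\lambda_i I)^+y_i=0$, whereas you invoke these as standard Moore--Penrose properties; the logical structure is identical.
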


Based on Lemma \ref{lem: 1st_EDS}, we know the extent of the changes in subspace $A = span(X)$ when the feature space ($X$) has suffered a disturbance, expressed as $\partial A /\partial X$.

\subsection{Gradients of $J_1$}
With the proposed lemma, we obtain the derivative of cost function $J_1$ using following chain rules. For simplicity, $S_m^s$ is short for $S_m(f_s,X_s^\delta)$ and $S_m^t$ is short for $S_m(f_t,X_t^\delta)$.
\begin{align} \label{eq: main_gra}
&\frac{\partial J_1}{\partial (U_s)_{cd}}=
\int_{\delta=0}^{\delta_0}\frac{\partial J_1}{\partial \mathcal{D}} 
\sum_{i=1}^r\frac{\partial \mathcal{D}}{(\partial \sigma_i ((S_m^s)^T S_m^t) } \cdot \nonumber \\ 
&Tr\left(\Big(\frac{\partial \sigma_i ((S_m^s )^T S_m^t )}{\partial S_m^s}\Big)^T  
\frac{\partial S_m^s}{\partial (U_s)_{cd} }\right) d\delta +\lambda_s (U_s )_{cd}. 
\end{align}

The first and second terms of the right side can be easily calculated according to the definition of the cost function $J_1$. Using chain rules, the third term can be calculated by the following equations:

\begin{align}\label{eq: part 1}
\frac{\partial \sigma_i ((S_m^s )^T S_m^t )}{\partial (S_m^S)_{kl}}=
Tr\left(\Big(\frac{\partial \sigma_i ((S_m^s )^T S_m^t )}{\partial (S_m^s)^TS_m^s}\Big)^T  
\frac{\partial (S_m^s)^TS_m^s}{\partial (S_m^S)_{kl} }\right),
\end{align}

\begin{align}\label{eq: part 2}
\frac{ \partial (S_m^S)_{kl}) }{\partial (U_s)_{cd}}=
Tr\left(\Big(\frac{\partial (S_m^s)_{kl}}{\partial f_s(X_s^\delta)}\Big)^T  
\frac{\partial f_s(X_s^\delta)}{ \partial (U_s)_{cd} }\right).
\end{align}

In terms of the first-order derivatives of EDS, we have following equations:
\begin{align}\label{eq: part 3}
\Big(\frac{\partial \sigma_i ((S_m^s )^T S_m^t )}{\partial (S_m^s)^TS_m^s}\Big)_{pq}=&
\frac{1}{2\sigma_i ( (S_m^s )^T S_m^t )}y_i^T\Big(J_{pq}( (S_m^s )^T S_m^t )^T \nonumber \\
&+ ( (S_m^s )^T S_m^t )J_{pq}^T\Big)y_i,
\end{align}
\begin{align}\label{eq: part 4}
\Big(\frac{\partial (S_m^s)_{kl}}{\partial f_s(X_s^\delta)}\Big)
=-&\Big((f_s (f_s )^T-\lambda_l I)^+ (J_{ab} (f_s)^T+f_s J_{ab}^T ) \cdot \nonumber \\
&(S_m^s )_{*l} \Big)_k,
\end{align}
where $y_i$ is the eigenvector corresponding to $\sigma_i ((S_m^s)^T S_m^t )$, $\lambda_l$ is the $l^{th}$ eigenvalue corresponding to $l^{th}$ column of $S_m^s$, and  $J_{pq}$  is a single-entry matrix with 1 at $(p; q)$ and zero elsewhere. \eqref{eq: part 3} will generate a matrix of the same size as $(S_m^s)^T S_m^t$, and \eqref{eq: part 4} will generate a matrix of the same size as $f_s (X_s^\delta)$, i.e.,  $f_s$ in \eqref{eq: part 4}. For other terms of \eqref{eq: part 1} and \eqref{eq: part 2}, we have the following equations:

\begin{align}
\frac{\partial (S_m^s)^TS_m^s}{\partial (S_m^S)_{kl} } = J_{kl}^T S_m^t + (S_m^s)^T J_{kl},
\end{align}

\begin{align}
\frac{\partial f_s(X_s^\delta)}{ \partial (U_s)_{cd} } = X_s^\delta J_{cd}.
\end{align}

We adopt Simpson's rule to integrate $\delta$. Simpson's rule is a method of numerical integration that can be used to calculate the value of cost function J1. We set $\Delta = \delta_0/10$, and the derivative of cost function $J_1$ is calculated with

\begin{align}
\frac{\partial J_1}{\partial (U_s)_{cd}} = &\frac{\Delta}{6}\sum_{I=0}^9
\Big(g_s(I\Delta) + 4g_s\big(I\Delta + \frac{\Delta}{2}\big)+ g_s(I\Delta + \Delta)\Big)  \nonumber \\
&+ \lambda_s(U_s)_{cd},
\end{align}
where $g_s(\Delta)$ is the integrated part in \eqref{eq: main_gra} with $S_m^s = S_m(f_s,X_s^\Delta)$. The gradient descent equations for minimizing the cost function $J_1$ with respect to $(U_s)_{cd}$ are
\begin{align}\label{eq: Us_gra}
(U_s)_{cd} = (U_s)_{cd} - v_{bool}^s \times \eta \frac{\partial J_1}{\partial (U_s)_{cd}},
\end{align}
where
\begin{align}\label{eq: bool for LMMs}
v_{bool}^s = max\Big\{0, (U_s)_{cd} - \eta \frac{\partial J_1}{\partial (U_s)_{cd}}\Big\},
\end{align}
$v_{bool}^s$, expressed in \eqref{eq: bool for LMMs}, is used to keep $f_s$ as an LMM. Similarly, we optimize $U_t$ using the following equation.
\begin{align}\label{eq: Ut_gra}
(U_t)_{ce} = (U_t)_{ce} - v_{bool}^t \times \eta \frac{\partial J_1}{\partial (U_t)_{ce}}.
\end{align}

\subsection{Optimization of GLG}

We use a hybrid method of minimizing $J_1$: 1) an evolutionary algorithm: cuckoo search algorithm (CSA) \cite{Yang2010}, is used to find initial solutions $U_s^{(0)}$ and $U_t^{(0)}$; 2) a gradient descent algorithm to find the best solutions. To accelerate the speed of the gradient descent algorithm, we select $\eta$ from [0.01, 0.05 0.1, 0.2, 0.5, 1, 5, 20] such that it obtains the best (minimum) cost value for each iteration. 

For CSA, we set the number of nests as 30, the discovery rate as 0.25, the lowest bound as 0, the highest bound as 1 and the number of iteration as 100. We also apply Simpson's rule to estimate the integration value in $J_1$. CSA has been widely applied in many fields. Its code can be downloaded from MathWorks.com where readers can also find more detailed information about this algorithm. 
Algorithm~\ref{alg:1} presents the pseudo code of the GLG model. $MaxIter$ is set to 100, $err$ is set to $10^{-5}$ and $\delta_0$ of $J_1$ is set to 0.01.

\begin{algorithm} [h]

     \caption{Pseudo code of GLG model} 
     \label{alg:1}
      \KwIn{Source data, target data: $X_s, X_t$} 
       
      $U_s^{(0)}, U_t^{(0)}\leftarrow$ CSA($X_s, X_t$); \% Get Initial solutions\\
      \For{$i=0:MaxIter$} 
        {  
            $Error_o\leftarrow J_1(X_s,X_t; U_s^{(i)},U_t^{(i)})$\;
            Select the best $\eta$\;
            $U_s^{(i+1)}\leftarrow$ Update $U_s^{(i)}$ using \eqref{eq: Us_gra}\;   
            $U_t^{(i+1)}\leftarrow$ Update $U_t^{(i)}$ using \eqref{eq: Ut_gra}\;
            $Error_n\leftarrow J_1(X_s,X_t; U_s^{(i+1)},U_t^{(i+1)})$ \;
            \If{$|Error_o-Error_n|<err$}
            {
            	Break; \% Terminates the iteration.
            }
       } 
      ${X}_s^{Ho}\leftarrow X_sU_s^T$;\\
      ${X}_t^{Ho}\leftarrow X_tU_t^T$;\\
      $[{X}_s^{Adp}, {X}_t^{Adp}]\leftarrow$ GFK(${X}_s^{Ho}, {X}_t^{Ho}$);\\
      \KwOut{Source data, target data: ${X}_s^{Adp}, {X}_t^{Adp}$.}
\end{algorithm}

Ultimately, $\mathbf{D}_s^{Adp} = ({X}_s^{Adp}, Y_s)$ can be used to train a machine learning model, based on ${X}_s^{Adp}$ and ${X}_t^{Adp}$, to predict the labels for $\mathbf{D}_t^{Adp} = ({X}_t^{Adp})$.

\section{Experiments}
To validate the overall effectiveness of the GLG model, we conducted experiments with five datasets across three fields of application: cancer detection, credit assessment, and text classification. All datasets are publicly available from the UCI Machine Learning Repository (UMLR) and Transfer Learning Resources (TLR). An SVM algorithm was used as the classification engine.

\subsection{Datasets for HeUDA}
The five datasets were reorganized since no real-world datasets directly related to HeUDA. Table \ref{tab: Basic_info} lists the details of the datasets from UMLR and TLR. Reuters-21578 is a transfer learning dataset, but we needed to merge the source domain for each category with its corresponding target domain into a new domain, e.g., OrgsPeople\_{src} and OrgPeople\_{tar} were merged into OrgPeople; and similarly for OrgPlaces and PeoplePlaces. 
Table \ref{tab: tasks_info} lists the tasks and clarifies the source and target domains. 
Tasks G2A, Ope2Opl and CO2CD are described in detail below. Other tasks have similar meanings.

1)	G2A: Assume that the German data is labeled and the Australian data is unlabeled. Label ``1" means “good credit” and label ``-1" means “bad credit”. This task is equivalent to the question: ``Can we use knowledge from German credit records to label unlabeled Australian data?"

2)	Ope2Opl: Assume that in one dataset “Org” is labeled ``1" and “People” is labeled ``-1" (Ope in Table \ref{tab: tasks_info}). Another unlabeled dataset may contain “Org” labeled as ``1". This task is equivalent to the question: ``Can we use the knowledge from Ope to label ``Org" in the unlabeled dataset?"

3)	CO2CD: Assume that in the Breast Cancer Wisconsin (Original) dataset (CO in Table \ref{tab: tasks_info}) ``1" represents “malignant” and ``-1" represents “benign”. Another unlabeled dataset related to breast cancer also exists. This task is equivalent to the question: ``Can we use the knowledge from CO to label ``malignant" in the unlabeled dataset?"

{Recall $\mathcal{X}$ and $\omega$, in datasets of Breast Cancer and credit assessment, $\mathcal{X}$ is human beings and $\omega$ is a subset of $\mathcal{X}$, which means that $\omega$ is a set containing many persons. For each person in $\omega$, we may diagnose whether he/she has Breast Cancer using features that CO or CD datasets adopt. Similarly, for each person in $\omega$, we may assess his/her credit using standards in Germany or Australia.}

{The multivariate random variable $\mathbf{X}$ in datasets of Breast Cancer describes key features to distinguish whether a tumour is benign. Namely, if we can obtain observations from distribution of $\mathbf{X}$, we will perfectly classify benign tumour and malignant tumour. However, these observations cannot be obtained and we can only obtain features in both datasets of Breast Cancer used in this paper. Features in both datasets can be regarded as observations from $\mathbf{X_s}$ and $\mathbf{X_t}$ which are heterogeneous projections of $\mathbf{X}$.}

{In datasets of credit assessment, $\mathbf{X}$ describes key features to distinguish whether the credit of a person is good. However, observations from distribution of $\mathbf{X}$ cannot be obtained and we can only obtain features in German and Australian datasets. Features in both datasets can be regarded as observations from $\mathbf{X_s}$ and $\mathbf{X_t}$ which are heterogeneous projections of $\mathbf{X}$.}

\begin{table*}[htbp]
  \centering
  \caption{Description of the original datasets.} 
    \begin{tabular}{lllll}
    \toprule
    \textbf{Field} & \textbf{Dataset name} & {\textbf{\# of instances}} & {\textbf{\# of features}} & \textbf{Source} \\
    \midrule
    \multirow{2}[1]{*}{Credit assessment (two datasets)} & German Credit Data & 1000 & 24 & UMLR\\
        & Australian Credit Approval & 690 & 14 & UMLR \\
     {\multirow{6}[0]{*}{Text classification (one dataset)}} & Reuters-21578 OrgsPeople\_src & 1237 & 4771 & TLR \\
        & Reuters-21578 OrgsPeople\_tar & 1208 & 4771 & TLR \\
        & Reuters-21578 OrgsPlaces\_src & 1016 & 4415 & TLR \\
        & Reuters-21578 OrgsPlaces\_tar & 1043 & 4415 & TLR \\
        & Reuters-21578 PeoplePlaces\_src & 1077 & 4562 & TLR \\
        & Reuters-21578 PeoplePlaces\_tar & 1077 & 4562 & TLR \\
     {\multirow{2}[1]{*}{Cancer detection (two datasets)}} & Breast Cancer Wisconsin (Original) & 683 & 9  & UMLR \\
        & Breast Cancer Wisconsin (Diagnostic) & 569 & 30 & UMLR \\
    \bottomrule
    \end{tabular}%
  \label{tab: Basic_info}%
\end{table*}%

\begin{table*}[htbp]
  \centering
  \caption{Transfer tasks (10 tasks in total).}
    \begin{tabular}{lllll}
    \toprule
    \multicolumn{1}{p{9.89em}}{\textbf{Field}} & \textbf{Source} & \textbf{Target } & \textbf{Labels} & \textbf{Task} \\
    \midrule
    \multicolumn{1}{c}{\multirow{2}[1]{*}{Credit assessment (two datasets)}} & German Credit Data & Australian Credit Approval & 1: Good & G2A \\
       & Australian Credit Approval & German Credit Data & 1: Good & A2G \\
    \multicolumn{1}{c}{\multirow{6}[0]{*}{Text classification (one dataset)}} & OrgsPeople & OrgsPlaces & 1: Orgs & Ope2Opl \\
       & OrgsPlaces & OrgsPeople & 1: Orgs & Opl2Ope \\
       & OrgsPlaces & PeoplePlaces & -1: Places & Opl2Ppl \\
       & PeoplePlaces & OrgsPlaces & -1: Places & Ppl2Opl \\
       & PeoplePlaces & OrgsPeople & -  & Ppl2Ope \\
       & OrgsPeople & PeoplePlaces & -  & Ope2Ppl \\
    \multicolumn{1}{c}{\multirow{2}[1]{*}{Cancer detection (two datasets)}} & Breast Cancer Wisconsin (Original) & Breast Cancer Wisconsin (Diagnostic) & 1: Malignant & CO2CD \\
       & Breast Cancer Wisconsin (Diagnostic) & Breast Cancer Wisconsin (Original) & 1: Malignant & CD2CO \\
    \bottomrule
    \end{tabular}%
  \label{tab: tasks_info}%
  \vspace{-0.5cm}
\end{table*}%

\subsection{Experimental setup}
The baselines and their implementation details are described in the following section.

\subsubsection{Baselines}
It was important to consider which baselines to compare the GLG model with. There are two baselines that naturally consider situations where no related knowledge exists in an unlabeled target domain: 1) models that label all instances as “1”, denoted as A1; and 2) models that cluster the instances with random category labels (the k-means method clusters the instances in the target domain into two categories), denoted as CM. It is important to highlight that A1 and CM are non-transfer models.

When transferring knowledge from a source domain to a heterogeneous and unlabeled target domain, there is a simple baseline that applies dimensional reduction technology to force the two domains to have the same number of features. Denoted as \emph{Dimensional reduction Geodesic flow kernel} (DG), this model forces the dimensionality of all features to be the same. DG is a useful model to show the difficulties associated with HeUDA problem.

An alternative model, denoted as \emph{Random Maps GFK} (RMG), randomly maps (linear map) features of two domains onto the same dimensional space. The comparison between this model and \emph{Random LMM GFK} (RLG) shows the effect of negative transfer. The RLG model only uses random LMMs to construct the homogeneous representations and does not preserve the distance between the domains (it only considers the variation factor). The KCCA model with randomly-paired instances is also considered as a baseline. 

{Although deep-learning based models were originally designed for homogeneous domains, we only need to change the number of neurons in the second layer of these models to make them suitable for heterogeneous domains. Consequently, DANN {\cite{DANN_JMLR}}, DAN {\cite{Long_DAN_journal}} and \emph{beyond-sharing-weights domain adaptation} (BSWDA) {\cite{Rodriguez2014}} are selected to compare with GLG. 

The last selected baseline is SFER, which is inspired by the fuzzy co-clustering method (cluster features of two domains). Apart from the deep-learning based models, the selected domain adaptation models and GLG models map two heterogeneous feature spaces onto the same dimensional feature space (i.e., the homogeneous representations) at the lowest dimension of the original feature spaces.}


\subsubsection{Implementation details}

Following {\cite{Pan2010,Li2014,Pan2011,Xiao2015}}, {SVM was trained on homogeneous representations of source domain, then tested on target domain. The following section provides implementation details of our experiments.

The original datasets used in the text classification tasks were preprocessed using SVD (selecting top $50\%$ Eigenvalues) as the dimensionality reduction method for non-deep models. We randomly selected $1,500$ unbiased instances from each domain to test the proposed model and baselines. The German Credit dataset contains some bias, with $70\%$ of the dataset labeled $1$ and $30\%$ labeled $-1$; however, the Australian Credit Approval dataset is unbiased. Given the basic assumption that both domains are similar, we needed to offset this dissimilarity by changing the implementation of the experiments with this dataset. Hence, we randomly selected $600$ unbiased instances from the German Credit dataset for every experiment and ran the experiment 50 times for each model and each task.}

{The DAN and BSWDA models are neural networks including five layers: input layer, hidden layer I, hidden layer II, representation layer and output layer}. For the credit and cancer datasets, the number of neurons in hidden layer I (and II) is $200$ and the number of neurons in the representation layer is $100$. For text classification dataset, the number of neurons in hidden layer I (and II) is $2,000$ and  the number of neurons in the representation layer is $1,000$. {The classifier for the DANN model is also a neural network (including five layers) and has the same setting with DAN and BSWDA.} Its domain classifier is a three-layer neural network. The number of neurons in the hidden layer of DANN's domain classifier is set to $1,000$ for text classification dataset and $100$ for other datasets. Following {\cite{KSaito_ICML17}}, {Adagrad optimizer is used to optimize parameters of DAN, BSWDA and DANN on text classification datasets, since Adagrad optimizer is suitable for sparse features. On other datasets, Adam optimizer is adopted to optimize parameters of DAN, BSWDA and DANN.}

$Accuracy$ was used as the test metric, as it has been widely adopted in the literature \cite{Pan2011,Ghifary2017,Li2014}:
\begin{align*}
Accuracy = \frac{|x\in X_t: g(x)=y(x)|}{|x \in X_t|},
\end{align*}
{where {$y(x)$} is the ground truth label of {$x$}, while {$g(x)$} is the label predicted by the SVM classification algorithm. Since the target domains do not contain any labeled data, it was impossible to automatically tune the optimal parameters for the target classifier using cross-validation. As a result, we used LIBSVM’s default parameters for all classification tasks. Because there were no existing pairs in the $10$ tasks, we randomly matched instances from each domain as pairs for the KCCA model. For neural networks, we report the best average accuracy of each dataset (using the same parameters for tasks constructed from the same dataset) by tuning the learning rate of optimizers and the penalty parameters of the regularizers of each model. The batch size was set to $24$ and the number of epochs was set to $50$ for all datasets.}

All experiments were conducted on an Intel(R) Core(TM) i$7$-$4770$ CPU at $3.40$Ghz with a memory of $64$ GB running Windows $7$ professional $64$-bit operating system. {Deep-learning based models were implemented by Pytorch $0.4.0$} and other models were implemented by Matlab $9.2.0$. To show the complexity of each task, we also tested same-domain accuracy with a $5$-fold SVM using the default parameters on seven different target domains. We randomly selected unbiased instances from five domains (the cancer datasets were excluded), and ran the experiments $50$ times, preprocessing the instances with the zscore function. Table \ref{tab: same-domain} shows the average accuracy and standard deviations in terms of AVG$\pm$STD. The results show that the German Credit dataset and the Ppl dataset were the hardest to classify and the Cancer-D and Cancer-O datasets were the easiest. In general, the accuracy of the HeUDA models was lower than the same-domain (target) accuracy due to the lack of labels in the target domain. 

\begin{table}[htbp]
  \centering
  \scriptsize
  \caption{Same-domain accuracy of each target domain using 5-fold SVM.}
    \begin{tabular}{p{1em}p{2.1em}p{2.165em}p{2.165em}p{2.165em}p{2.165em}p{2.165em}}
    \toprule
    \multicolumn{1}{c}{\textbf{German}} & \multicolumn{1}{c}{\textbf{Australia}} & \textbf{Opl} & \multicolumn{1}{c}{\textbf{Ope }} & \textbf{Ppl} & \textbf{CD} & \textbf{CO} \\
    \midrule
    \multicolumn{1}{c}{71.21\%} & \multicolumn{1}{c}{86.10\%} & \multicolumn{1}{c}{84.97\%} & \multicolumn{1}{c}{85.15\%} & \multicolumn{1}{c}{78.40\%} & \multicolumn{1}{c}{97.01\%} & \multicolumn{1}{c}{96.49\%} \\
    $\pm$1.56\% & $\pm$0.82\% & $\pm$0.88\% & $\pm$0.71\% & $\pm$0.82\% & $\pm$0.00\% & $\pm$0.00\% \\
    \bottomrule
    \end{tabular}%
  \label{tab: same-domain}%
  \vspace{-0.5cm}
\end{table}%

\subsection{Experiment I: RMG}
\label{sec:exp_1_RMG}
This experiment demonstrates a situation in which the transfer process is unreliable. It is a natural idea to propose a HeUDA model that randomly maps two domains onto the same feature space, then uses a HoUDA model to adapt the domains. Hence, the RMG model randomly generated $f_s(X_s) = X_sU_s^T$ and $f_t(X_t) = X_tU_t^T$ to transfer knowledge from the source domain to the target domain. Table \ref{tab: RMG} shows the classification results for RMG compared to CM across $50$ tests against three criteria: AVG$\pm$STD, max accuracy, and min accuracy. The results indicate that RMG is not a valid option for transferring knowledge from a source domain to a target domain. The average accuracy was low, especially for the CD2CO task, where the minimum accuracy was $7.91\%$. Namely, label space was greatly changed after the transfer. 

\begin{table*}[htbp]
  \centering
  \caption{The classification results for RMG and CM.}
    \begin{tabular}{llllllllll}
    \toprule
    \multirow{2}[4]{*}{Field} & \multirow{2}[4]{*}{Task} & \multicolumn{2}{l}{Average Accuracy} &    & \multicolumn{2}{l}{Max Accuracy} &    & \multicolumn{2}{l}{Min Accuracy} \\
\cmidrule{3-4}\cmidrule{6-7}\cmidrule{9-10}       &    & \multicolumn{1}{p{6.5em}}{RMG} & \multicolumn{1}{p{6.165em}}{CM} &    & \multicolumn{1}{p{3.5em}}{RMG} & \multicolumn{1}{p{3.11em}}{CM} &    & \multicolumn{1}{p{3.5em}}{RMG} & \multicolumn{1}{p{3.11em}}{CM} \\
    \midrule
    \multicolumn{1}{l}{\multirow{2}[1]{*}{Credit Assessment (Two datasets)}} & G2A & \textbf{49.46\%$\pm$13.31\%} & 44.89\%$\pm$0.40\% &    & \textbf{75.94\%} & 56.23\% &    & \textbf{24.49\%} & 43.77\% \\
       & A2G & \textbf{49.34\%$\pm$5.2\%} & 50.97\%$\pm$5.21\% &    & \textbf{59.33\%} & 57.17\% &    & \textbf{36.00\%} & 43.67\% \\
    \multicolumn{1}{l}{\multirow{6}[0]{*}{Text Classification (One dataset)}} & OPe2OPl & \textbf{52.36\%$\pm$5.20\%} & 49.76\%$\pm$5.79\% &    & \textbf{62.47\%} & 59.93\% &    & \textbf{41.27\%} & 40.07\% \\
       & OPl2OPe & 46.14\%$\pm$5.10\% & \textbf{49.4\%$\pm$5.01\%} &    & 56.00\% & \textbf{56.87\%} &    & 37.33\% & \textbf{43.13\%} \\
       & OPl2PPl & 48.98\%$\pm$5.84\% & \textbf{50.70\%$\pm$5.24\%} &    & \textbf{62.67\%} & 58.40\% &    & 36.13\% & \textbf{41.47\%} \\
       & PPl2OPl & 49.34\%$\pm$5.58\% & \textbf{49.76\%$\pm$5.79\%} &    & \textbf{64.27\%} & 59.93\% &    & 38.40\% & \textbf{40.07\%} \\
       & OPe2PPl & \textbf{51.83\%$\pm$4.99\%} & 50.70\%$\pm$5.24\% &    & \textbf{60.80\%} & 58.40\% &    & 40.07\% & \textbf{41.47\%} \\
       & PPl2OPe & 49.35\%$\pm$4.88\% & \textbf{49.4\%$\pm$5.01\%} &    & \textbf{61.40\%} & 56.87\% &    & \textbf{40.47\%} & 43.13\% \\
    \multicolumn{1}{l}{\multirow{2}[1]{*}{Cancer Detection (Two datasets)}} & CD2CO & \textbf{58.92\%$\pm$27.88\%} & 38.94\%$\pm$45.17\% &    & \textbf{96.49\%} & 96.19\% &    & \textbf{7.91\%} & 3.81\% \\
       & CO2CD & \textbf{49.18\%$\pm$20.87\%} & 37.25\%$\pm$33.37\% &    & \textbf{89.10\%} & 85.41\% &    & \textbf{14.41\%} & 14.59\% \\
    \bottomrule
    \end{tabular}%
  \label{tab: RMG}%
  \vspace{-0.5cm}
\end{table*}%

The results of the two-sample MMD tests \cite{Gretton2012} are shown in Table \ref{tab: MMD} to demonstrate the significance of Theorem 1. These tests measure the maximum and minimum accuracy of the homogeneous representations for the two CD2CO tasks. In Table \ref{tab: MMD}, ``No" means that the two domains have different distributions, while ``Yes" means the two domains have the same distribution. 

It is easy to see that distributions of feature spaces of adapted domains can be regarded as having the same distribution (in terms of MMD) in these two extreme situations (highest and lowest accuracy). However, these identically-distributed domains unexpectedly returned extremely different accuracies at 7.91\% and 96.49\% when using SVM to label the instances in the target domain. This will result in significant errors even if $P(f_s (X_s)) = P(f_t (X_t))$, which is clearly caused by $P(Y|f_s (X_s)) \neq P(Y|f_t  (X_t)) $ (significant difference). Thus, this experiment supports our claim that Definition \ref{def: HeUDA_condition} (the HeUDA condition) and Theorem \ref{thm: HeUDA_theorem} (the unsupervised knowledge transfer theorem) are both necessary. It also shows the consequences of ignoring Theorem \ref{thm: HeUDA_theorem} - the conditional probability distribution will significantly change.

\begin{table}[htbp]
  \centering
  \caption{The results of the MMD test for the mapped and adapted domains in two extreme situations (lowest and highest accuracy) of task CD2CO among 50-time experiments.}
    \begin{tabular}{cccc}
    \toprule
    \multicolumn{1}{m{2cm}}{\textbf{Situation}} & \multicolumn{1}{m{2cm}}{\textbf{Task/Accuracy}} & \multicolumn{1}{m{2cm}}{\textbf{Homogeneous representations}} & \multicolumn{1}{m{1.1cm}}{\textbf{Adapted domains}} \\
    \midrule
    \multicolumn{1}{m{2cm}}{Lowest Accuracy} & \multicolumn{1}{m{2cm}}{CD2CO/7.91\%} & \multicolumn{1}{m{2cm}}{No} & \multicolumn{1}{m{1.1cm}}{Yes} \\
    \multicolumn{1}{m{2.1cm}}{Highest Accuracy} & \multicolumn{1}{m{2cm}}{CD2CO/96.49\%} & \multicolumn{1}{m{2cm}}{No} & \multicolumn{1}{m{1.1cm}}{Yes} \\
    \bottomrule
    \end{tabular}%
  \label{tab: MMD}%
  \vspace{-0.5cm}
\end{table}%

\subsection{Experiment II: Overall comparisons}

{This section presents classification results of the models presented in Section V-B, which are shown in Table {\ref{tab: overall}}. The results reflect that the GLG model was able to complete these 10 tasks effectively, and with better accuracy than other baselines. Our overall analysis of the comparative results reveals the following insights:

1)	The GLG model produced more stable classification results and higher classification accuracy than the other models.

2)	Although the KCCA, DAN, BSWDA and DANN models outperformed A1, DG, CM, and RMG in some tasks, the classification results were unstable as they did not prevent extreme negative transfer.

3) Since deep-learning based models (DAN, BSWDA and DANN) do not prevent extreme negative transfer, their classification results are unstable in tasks G2A, CD2CO and CO2CD. This means that a neural network, as a mapping function, cannot be directly used to address the HeUDA problem. }

4) Although deep-learning models have considerable potential for fining a representation of two domains, they cannot be directly used for addressing HeUDA problem. Some constraints should be considered to make a neural network prevent extreme negative transfer.

{5) The DANN model produced more stable classification results than DAN and BSWDA, which indicates that adversarial learning method is more suitable for the HeUDA problem than the two-sample-test-based method. 

6) GLG performs better than SFER in terms of average accuracy over $10$ tasks, which means that principal angles are better than fuzzy equivalence relations for describing relationships between two heterogeneous feature spaces.

7) {GLG can outperform baselines on $9$ out of $10$ tasks. On these $9$ tasks, using Friedman test, the improvement in performance of GLG over all baselines is statistically significant on $7$ tasks ($p$-value is less than $0.05$). The remaining $2$ tasks are OPl2OPe and CO2CD. On both of tasks, GLG cannot statistically significantly outperform SFER (the strongest baseline). 

8) Although SFER outperforms GLG on task OPe2OPl, we use Friedman test to investigate that SFER cannot statistically significantly outperform GLG on this task ($p$-value is greater than $0.05$). In summary, GLG is significantly better than all baselines on $7$ out of $10$ tasks and has the same performance with SFER on the remaining $3$ tasks from statistical view.} 

9) In comparing same-domain accuracy, the CD2CO task outperformed the CO task. Same-domain accuracy was harder to achieve in the text classification tasks than in the other two tasks. This result indicates that text classification tasks lose more information when transferring knowledge from the source domain to the target domain.}


For the runtime of each model, A1, DG, CM, RMG, KCCA and RLG finished the G2A task within $10$ \emph{seconds} (s). DAN took $142.35$s, BSWDA took $167.88$s, DANN took $121.34$s, and SFER took $20.38$s, and GLG took $82.05$s. When running GLG, the CSA algorithm costs $44.82$s, and Algorithm~\ref{alg:1} costs $35.34$s, and other procedures cost $1.89$s.

\begin{table*}[htbp]
  \centering
  \caption{{The classification results (AVG{$\pm$}STD) for GLG and benchmark models. Bold values represent the lowest average accuracy in each task.}}
    \begin{tabular}{llllllllllllll}
    \toprule
    Field &    & \multicolumn{2}{p{7.89em}}{Credit Assessment (Two datasets)} &    & \multicolumn{6}{p{23.67em}}{Text Classification (One dataset)} &    & \multicolumn{2}{p{7.89em}}{Cancer Detection (Two datasets)} \\
    \midrule
    Tasks &    & G2A & A2G &    & OPe2OPl & OPl2OPe & OPl2PPl & PPl2OPl & OPe2PPl & PPl2OPe &    & CD2CO & CO2CD \\
\cmidrule{1-1}\cmidrule{3-4}\cmidrule{6-11}\cmidrule{13-14}    \multicolumn{1}{l}{\multirow{2}[1]{*}{A1}} &    & \multirow{2}[1]{*}{50.00\%} & \multirow{2}[1]{*}{50.00\%} &    & \multirow{2}[1]{*}{50.00\%} & \multirow{2}[1]{*}{50.00\%} & \multirow{2}[1]{*}{50.00\%} & \multirow{2}[1]{*}{50.00\%} & \multirow{2}[1]{*}{50.00\%} & \multirow{2}[1]{*}{50.00\%} &    & \multirow{2}[1]{*}{65.01\%} & \multirow{2}[1]{*}{62.74\%} \\
       &    &    &    &    &    &    &    &    &    &    &    &    &  \\
    \multicolumn{1}{l}{\multirow{2}[0]{*}{DG}} &    & 45.19\% & 50.92\% &    & 47.17\% & 44.47\% & 48.38\% & 46.37\% & 45.52\% & 43.07\% &    & 34.62\% & 35.87\% \\
       &    & \multicolumn{1}{p{3.945em}}{$\pm$1.96\%} & \multicolumn{1}{p{3.945em}}{$\pm$1.06\%} &    & \multicolumn{1}{p{3.945em}}{$\pm$3.14\%} & \multicolumn{1}{p{3.945em}}{$\pm$1.60\%} & \multicolumn{1}{p{3.945em}}{$\pm$5.51\%} & \multicolumn{1}{p{3.945em}}{$\pm$4.09\%} & \multicolumn{1}{p{3.945em}}{$\pm$2.54\%} & \multicolumn{1}{p{3.945em}}{$\pm$1.75\%} &    & \multicolumn{1}{p{3.945em}}{$\pm$17.25\%} & \multicolumn{1}{p{3.945em}}{$\pm$7.56\%} \\
    \multicolumn{1}{l}{\multirow{2}[0]{*}{CM}} &    & 44.89\% & 50.97\% &    & 49.76\% & 49.40\% & 50.70\% & 49.76\% & 50.70\% & 49.40\% &    & 38.94\% & 37.25\% \\
       &    & $\pm$0.4\% & $\pm$5.21\% &    & $\pm$5.79\% & $\pm$5.01\% & $\pm$5.24\% & $\pm$5.79\% & $\pm$5.24\% & $\pm$5.01\% &    & $\pm$45.17\% & $\pm$33.37\% \\
    \multicolumn{1}{l}{\multirow{2}[0]{*}{RMG}} &    & 49.46\% & 49.34\% &    & 52.36\% & 46.14\% & 48.98\% & 49.34\% & 51.83\% & 49.35\% &    & 58.92\% & 49.18\% \\
       &    & $\pm$13.31\% & $\pm$5.2\% &    & $\pm$5.2\% & $\pm$5.1\% & $\pm$5.84\% & $\pm$5.58\% & $\pm$4.99\% & $\pm$4.88\% &    & $\pm$27.88\% & $\pm$20.87\% \\
    \multicolumn{1}{l}{\multirow{2}[0]{*}{KCCA \cite{Yeh2014}}} &    & 51.05\% & 50.52\% &    & 49.28\% & 48.19\% & 50.27\% & 50.03\% & 50.52\% & 43.07\% &    & 75.50\% & 57.10\% \\
       &    & $\pm$9.72\% & $\pm$4.64\% &    & $\pm$3.22\% & $\pm$3.66\% & $\pm$3.21\% & $\pm$3.64\% & $\pm$3.52\% & $\pm$1.75\% &    & $\pm$15.19\% & $\pm$6.3\% \\
    \multicolumn{1}{l}{\multirow{2}[0]{*}{RLG}} &    & 72.70\% & 57.21\% &    & 60.15\% & 58.08\% & 57.71\% & 63.07\% & 56.88\% & 56.89\% &    & 96.59\% & 90.19\% \\
       &    & $\pm$6.14\% & $\pm$3.96\% &    & $\pm$3.16\% & $\pm$2.65\% & $\pm$2.5\% & $\pm$3.11\% & $\pm$2.55\% & $\pm$2.9\% &    & $\pm$0.45\% & $\pm$0.71\% \\
    \multirow{2}[0]{*}{DAN \cite{Long_DAN_journal}} &    & 55.28\% & 52.45\% &    & 52.93\% & 52.47\% & 53.67\% & 54.31\% & 51.41\% & 49.77\% &    & 58.35\% & 84.53\% \\
       &    & $\pm$6.25\% & $\pm$3.46\% &    & $\pm$1.73\% & $\pm$1.72\% & $\pm$2.03\% & $\pm$1.56\% & $\pm$2.26\% & $\pm$2.29\% &    & $\pm$26.11\% & $\pm$5.33\% \\
    \multirow{2}[0]{*}{BSWDA \cite{Rozantsev2018}} &    & 51.08\% & 52.67\% &    & 51.57\% & 51.33\% & 51.08\% & 51.13\% & 51.59\% & 50.55\% &    & 67.18\% & 83.03\% \\
       &    & $\pm$8.04\% & $\pm$6.40\% &    & $\pm$3.98\% & $\pm$3.76\% & $\pm$3.88\% & $\pm$2.75\% & $\pm$2.49\% & $\pm$2.79\% &    & $\pm$26.13\% & $\pm$16.18\% \\
    \multirow{2}[0]{*}{DANN \cite{DANN_JMLR}} &    & 65.72\% & 56.78\% &    & 54.25\% & 52.59\% & 52.22\% & 54.56\% & 52.59\% & 51.37\% &    & 94.55\% & 87.89\% \\
       &    & $\pm$7.43\% & $\pm$2.32\% &    & $\pm$1.56\% & 1.27\% & $\pm$1.75\% & $\pm$1.74\% & $\pm$2.11\% & $\pm$1.69\% &    & $\pm$2.31\% & $\pm$1.44\% \\
    \multicolumn{1}{l}{\multirow{2}[0]{*}{SFER \cite{Liu_TFS}}} &    & 75.77\% & 60.50\% &    & \textbf{62.19\%} & 58.95\% & 56.91\% & 64.11\% & 56.01\% & 57.51\% &    & 96.61\% & 90.20\% \\
       &    & \multicolumn{1}{p{3.945em}}{$\pm$0.95\%} & \multicolumn{1}{p{3.945em}}{$\pm$1.35\%} &    & \textbf{$\pm$1.44\%} & $\pm$1.52\% & $\pm$1.28\% & $\pm$0.81\% & $\pm$0.88\% & $\pm$1.15\% &    & $\pm$0.01\% & 0.03\% \\
    \multicolumn{1}{l}{\multirow{2}[1]{*}{GLG}} &    & \textbf{78.18\%} & \textbf{61.25\%} &    & 62.10\% & \textbf{59.54\%} & \textbf{59.62\%} & \textbf{65.57\%} & \textbf{58.31\%} & \textbf{58.81\%} &    & \textbf{97.18\%} & \textbf{90.22\%} \\
       &    & \textbf{$\pm$1.53\%} & \textbf{$\pm$2.1\%} &    & $\pm$1.63\% & \textbf{$\pm$0.85\%} & \textbf{$\pm$1.54\%} & \textbf{$\pm$0.79\%} & \textbf{$\pm$0.83\%} & \textbf{$\pm$1.38\%} &    & \textbf{$\pm$0.15\%} & \textbf{$\pm$0.26\%} \\
    \bottomrule
    \end{tabular}%
  \label{tab: overall}%
  \vspace{-0.5cm}
\end{table*}%



\section{Conclusions and further studies}

This paper fills several theoretical gaps in the field of heterogeneous unsupervised domain adaptation. On a foundational level, we present an unsupervised knowledge transfer theorem that outlines the sufficient conditions to guarantee that knowledge is transferred correctly from a source domain to a heterogeneous and unlabeled target domain. Additionally, we prove that the theorem is able to avoid negative transfer with at least one type of mapping function - LMM in this case. The theorem incorporates a distance metric, based on principal angles, to help construct homogeneous representations for heterogeneous domains. 
 
The theorem, the distance metric, and the LMM mapping function are presented within the GLG model, which optimizes (minimizes) the principal angle-based metric to construct homogeneous representations for heterogeneous domains, then transfers knowledge across the homogeneous representations using a geodesic flow kernel. 
The overall efficacy of the GLG model was tested with five public datasets on three practical tasks: cancer detection, credit assessment, and text classification. The model demonstrates superior performance over the existing baselines in all evaluation criteria.

Our future research will focus on two streams:
1)	an effective HeUDA model for multi-class classification problems and corresponding theoretical guarantees, and; 
2)	a generalization bound on target loss for the HeUDA problem.


%



\section*{Acknowledgment}

The work presented in this paper was supported by the Australian Research Council under Discovery Grant DP170101632.

\ifCLASSOPTIONcaptionsoff
  \newpage
\fi



%

\bibliographystyle{IEEEtran}
\bibliography{main}
\vspace{-5em}



%

\begin{IEEEbiography}[{\includegraphics[width=1in,height=1.25in,clip,keepaspectratio]{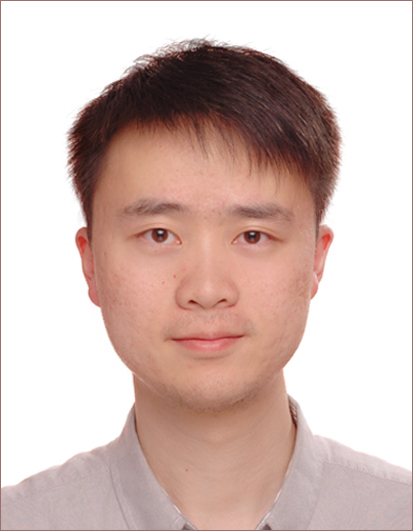}}]{Feng Liu}
is a Doctoral candidate in Centre for Artificial intelligence, Faculty of Engineering and Information Technology, University of Technology Sydney, Australia. He received an M.Sc. degree in probability and statistics and a B.Sc. degree in pure mathematics from the School of Mathematics and Statistics, Lanzhou University, China, in 2015 and 2013, respectively. His research interests include domain adaptation and two-sample test. He has served as a senior program committee member for ECAI and program committee members for NeurIPS, ICML, IJCAI, CIKM, FUZZ-IEEE, IJCNN and ISKE. He also served as reviewers for TPAMI, TNNLS, TFS and TCYB. He has received the UTS-FEIT HDR Research Excellence Award (2019), Best Student Paper Award of FUZZ-IEEE (2019) and UTS Research Publication Award (2018).
\vspace{-5em}
\end{IEEEbiography}

\begin{IEEEbiography}[{\includegraphics[width=1in,height=1.25in,clip,keepaspectratio]{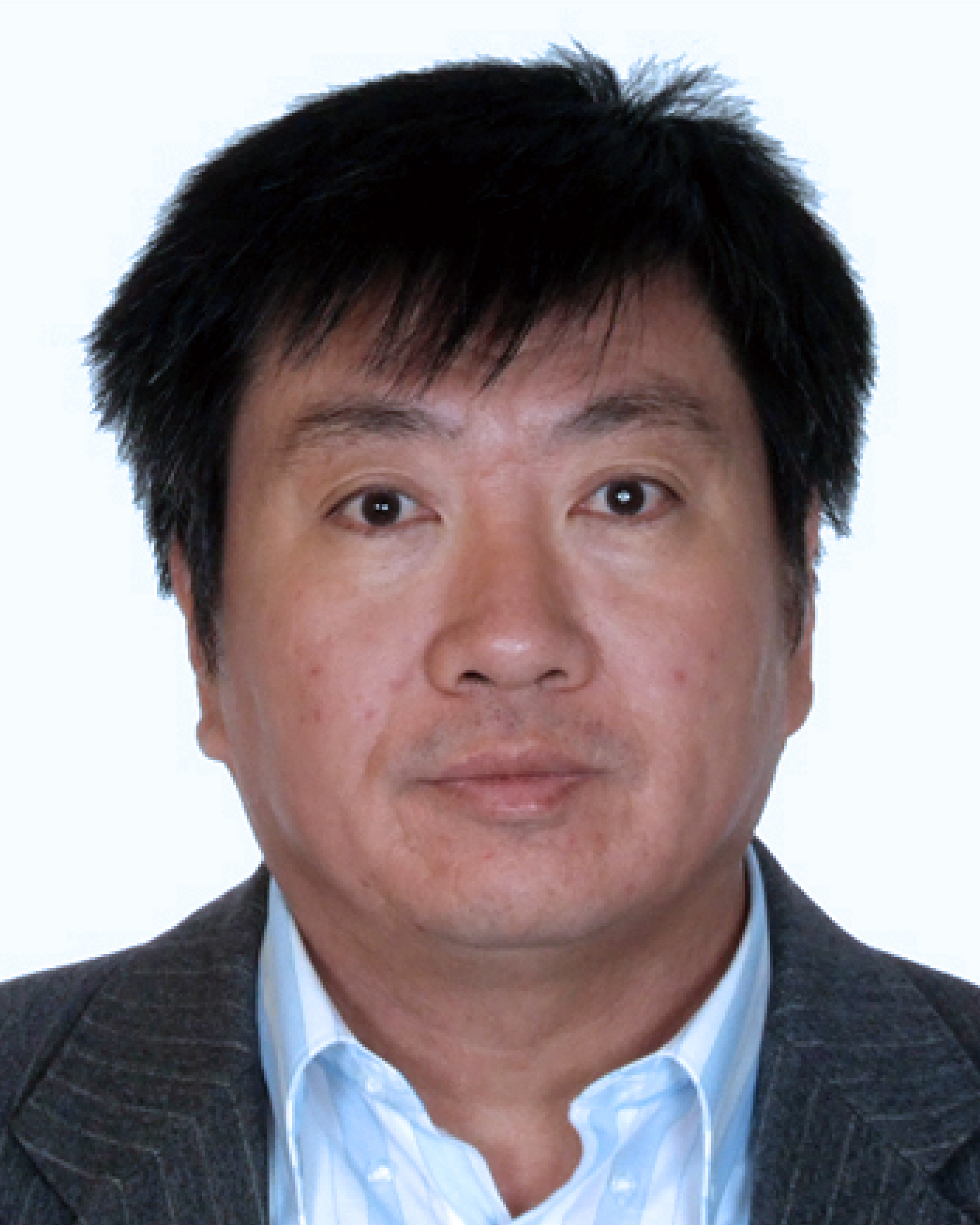}}]{Guangquan Zhang}
is an Associate Professor and Director of the Decision Systems and e-Service Intelligent (DeSI) Research Laboratory, Faculty of Engineering and Information Technology, University of Technology Sydney, Australia. He received his PhD in applied mathematics from Curtin University of Technology, Australia, in 2001.
His research interests include fuzzy machine learning, fuzzy optimization, and machine learning and data analytics. He has authored four monographs, five textbooks, and 450 papers in Artificial Intelligence Journal, Machine Learning Journal, IEEE Transactions on Fuzzy Systems and other refereed journals and conference proceedings.

Dr. Zhang has won seven Australian Research Council (ARC) Discovery Project grants and many other research grants. He was awarded an ARC QEII Fellowship in 2005. He has served as a member of the editorial boards of several international journals, as a guest editor of eight special issues for IEEE Transactions and other international journals, and co-chaired several international conferences and workshops in the area of fuzzy decision-making and knowledge engineering. 

\end{IEEEbiography}
\vspace{-5em}
\begin{IEEEbiography}[{\includegraphics[width=1in,height=1.25in,clip,keepaspectratio]{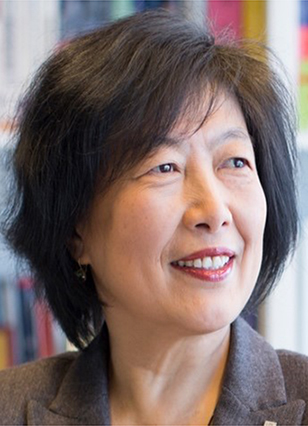}}]{Jie Lu}
(F'18) is a Distinguished Professor and the Director of the Centre for Artificial Intelligence at the University of Technology Sydney, Australia. She received the Ph.D. degree from Curtin University of Technology, Australia, in 2000.

Her main research expertise is in fuzzy transfer learning, decision support systems, concept drift, and recommender systems. She has published six research books and $400$ papers in Artificial Intelligence, IEEE transactions on Fuzzy Systems and other refereed journals and conference proceedings. She has won over $20$ Australian Research Council (ARC) discovery grants and other research grants for over \$$7$ million. She serves as Editor-In-Chief for Knowledge-Based Systems (Elsevier) and Editor-In-Chief for International Journal on Computational Intelligence Systems (Atlantis), has delivered $20$ keynote speeches at international conferences, and has chaired $10$ international conferences. She is a Fellow of IEEE and Fellow of IFSA.
\vspace{-5em}
\end{IEEEbiography}

\appendices

\section{Proof of Theorem 1}

\begin{proof}
For simplicity, we let 
\begin{align*}
\rho(\mathbf{Y}=1,\mathbf{X}_s,\mathbf{X}_t)=\frac{\beta_s(\mathbf{Y}=1,\mathbf{X}_s(\omega))P(\mathbf{X}_s(\omega))}{\beta_t(\mathbf{Y}=1,\mathbf{X}_t(\omega))P(\mathbf{X}_t(\omega))},
\end{align*}
and $\rho_{\mathbf{Y},\mathbf{X}_s,\mathbf{X}_t}$ for short. Based on the Eq.~(2), we have
\begin{align*}
\frac{P_{\mathbf{Y},\mathbf{X}_s}(\mathbf{Y}=1,\mathbf{X}_s(\omega))}{P_{\mathbf{X}_s}(\mathbf{X}_s(\omega))}=\rho_{\mathbf{Y},\mathbf{X}_s,\mathbf{X}_t}\frac{P_{\mathbf{Y},\mathbf{X}_t}(\mathbf{Y}=1,\mathbf{X}_t(\omega))}{P_{\mathbf{X}_t}(\mathbf{X}_t(\omega))}
\end{align*}
Let $\mathbf{Z}_s=f_s(\mathbf{X}_s)$ and $\mathbf{Z}_t=f_t(\mathbf{X}_t)$. Since $f^{-1}_t(f_t(\mathbf{X}_t))=\mathbf{X}_t$ and $f^{-1}_s(f_s(\mathbf{X}_s))=\mathbf{X}_s$, we have
\begin{align*}
P_{\mathbf{Y}=1,\mathbf{Z}_s}(\mathbf{Y}=1, \mathbf{Z}_s) = P_{\mathbf{Y}=1,\mathbf{X}_s}(\mathbf{Y}=1,f_s^{-1}(\mathbf{Z}_s)),\\
P_{\mathbf{Y}=1,\mathbf{Z}_t}(\mathbf{Y}=1, \mathbf{Z}_t) = P_{\mathbf{Y}=1,\mathbf{X}_t}(\mathbf{Y}=1,f_t^{-1}(\mathbf{Z}_t)),
\end{align*}
and
\begin{align*}
P_{\mathbf{Z}_s}(\mathbf{Z}_s) = P_{\mathbf{X}_s}(f_s^{-1}(\mathbf{Z}_s)),~~
P_{\mathbf{Z}_t}(\mathbf{Z}_t) = P_{\mathbf{X}_t}(f_t^{-1}(\mathbf{Z}_t)),
\end{align*}
Because $f_s(\mathbf{X}_s)$ is a monotonic map, there must be a 1-1 map between $\mathbf{X}_s$ and $\mathbf{Z}_s$, that is,
\begin{align*}
P_{\mathbf{Y}=1,\mathbf{X}_s}(\mathbf{Y}=1,f_s^{-1}(\mathbf{Z}_s)) = P_{\mathbf{Y}=1,\mathbf{X}_s}(\mathbf{Y}=1,\mathbf{X}_s).
\end{align*}
Hence, we arrive at the following equation.
\begin{align*}
\frac{P_{\mathbf{Y},\mathbf{X}_s}(\mathbf{Y}=1,f_s^{-1}(\mathbf{Z}_s))}{P_{\mathbf{X}_s}(f_s^{-1}(\mathbf{Z}_s))}=
\rho_{\mathbf{Y},\mathbf{X}_s,\mathbf{X}_t}
\frac{P_{\mathbf{Y},\mathbf{X}_t}(\mathbf{Y}=1,f_t^{-1}(\mathbf{Z}_t))}{P_{\mathbf{X}_t}(f_t^{-1}(\mathbf{Z}_t))}.
\end{align*}
That is,
\begin{align*}
\frac{P_{\mathbf{Y},\mathbf{Z}_s}(\mathbf{Y}=1,\mathbf{Z}_s)}{P_{\mathbf{Z}_s}(\mathbf{Z}_s)}=
\rho_{\mathbf{Y},\mathbf{X}_s,\mathbf{X}_t}
\frac{P_{\mathbf{Y},\mathbf{Z}_t}(\mathbf{Y}=1,\mathbf{Z}_t)}{P_{\mathbf{Z}_t}(\mathbf{Z}_t)}.
\end{align*}
Thus, we have
\begin{align*}
\frac{P(\mathbf{Y}=1|f_s(\mathbf{X}_s(\omega)))}{\beta_s(\mathbf{Y}=1,\mathbf{X}_s(\omega))} = \frac{P(\mathbf{Y}=1|f_t(\mathbf{X}_t(\omega)))}{\beta_t(\mathbf{Y}=1,\mathbf{X}_t(\omega))} = c(\omega),
\end{align*}
and this theorem is proven.
\end{proof}

\section{Proof of Lemma 1}
\begin{proof}
$\forall x_1, x_2 \in \mathbb{R}^m$, without loss of generality, we assume $x_1<x_2$ ($x_{1i}<x_{2i}, i=1,...,m$). Because $f(x)=xU^T$, we have
\begin{align*}
(f(x_1))_j = \sum_{i=1}^mx_{1i}u_{ji},~~(f(x_2))_j = \sum_{i=1}^mx_{2i}u_{ji}, j=1,...,r.
\end{align*}
So,
\begin{align*}
(f(x_1))_j - (f(x_2))_j= \sum_{i=1}^m(x_{1i}-x_{2i})u_{ji}, j=1,...,r.
\end{align*}
Because $x_{1i}-x_{2i}<0$ and $x_1$ and $x_2$ are any vector in $\mathbb{R}^m$ satisfying $x_1<x_2$, $(f(x_1))_j < (f(x_2))_j$ if and only if $u_{ji}>0$. We can simply prove the $f(x)$ is a decreasing monotonic map if and only if $u_{ji}<0$. 
\end{proof}

\section{Proof of Theorem 2}
\begin{proof}
Because $f_s(X_s)$ and $f_t(X_t)$ are LMMs, they satisfy the first condition of Theorem 1. So we only need to prove $f^{-1}(f({X_s}))={X_s}$. According to the Moore-Penrose pseudoinverse of $U_s$ in $f({X_s})$, it is clear that the second condition of Theorem 1 can be satisfied. Hence, this theorem is proved.
\end{proof}

\section{Proof of Theorem 3}

\begin{proof}
Let $A,B,C,D,E$ and $F$ be subspaces in $\mathbb{R}^N$. We need to prove following conditions.
\newline
1) $\mathcal{D}((A,B), (C,D)) \geq 0$;
\newline
2) $\mathcal{D}((A,B), (C,D)) = \mathcal{D}((C,D), (A,B))$;
\newline
3) $\mathcal{D}((A,B), (C,D)) = 0$ $\Leftrightarrow$ $A^TB=C^TD$;
\newline
4) $\mathcal{D}((A,B),(C,D))\leq \mathcal{D}((A,B), (E,F))+\mathcal{D}((E,F), (C,D))$.

From Definition 4, it is easy to prove 1) and 2). Based on Definition 3 (principal angles for heterogeneous feature spaces), we know $\sigma_i(A^TB)=\sigma_i(C^TD) \Leftrightarrow A^TB=C^TD$, which means that $\sigma_i(A^TB)-\sigma_i(C^TD)=0 \Leftrightarrow A^TB=C^TD$. Therefore, 3) is also proven. For 4), we have
\begin{align*}
&\mathcal{D}((A,B), (C,D)) \\
&=\sum_{i=1}^r\Big|\sigma_i (A^TB)-\sigma_i (E^TF)+\sigma_i (E^TF)- \sigma_i (C^TD)\Big|~~~~~~\\
&\leq\sum_{i=1}^r\Big|\sigma_i (A^TB)-\sigma_i (E^TF)\Big|+\sum_{i=1}^r\Big|\sigma_i (E^TF)- \sigma_i (C^TD)\Big| \\
&=\mathcal{D}((A,B), (E,F)) + \mathcal{D}((E,F), (C,D)).
\end{align*}
Thus, condition 4) is proven and ($\mathcal{D}, G_{N,*}^T \times G_{N,*}$) is a metric space.
\end{proof}

\section{Proof of Theorem 4}
\begin{proof}
Proving this theorem only requires proving that the optimized $U_s^*$ and $U_t^*$ in the GLG model are identical matrixes when $m=n$. In terms of Theorem 3, it is evident that $\mathcal{D}((S_{X_s^\delta},S_{X_t^\delta}),(S_{X_s^\delta},S_{X_t^\delta}))=0$. So, if $f_s(X_s)=X_s$ and $f_t(X_t) = X_t$, then we have $J_1=0$ (when $m=n$,$\lambda_s=\lambda_t=0$), which results in the optimal GLG model. 

Because $f_s(X_s)=X_s \Leftrightarrow U_s=I_s$ and $f_t(X_t)=X_t \Leftrightarrow U_t=I_t$, the GLG model degenerates into an ordinary GFK model.
\end{proof}

\section{Proof of Lemma 2}
\begin{proof}
Let 1) $\Lambda$ represent the diagonal matrix constructed by $\lambda_i$; 2) $(XX^T-\lambda_i I)^-=A(\Lambda-\lambda_i I)^+A^T$; and 3) $e_i=A^{-1}y_i$.

Hence, we derive the following equations:
\begin{align*}
\small
XX^TA=A\Lambda,~~(\Lambda-\lambda_i&I)^-e_i=(\Lambda-\lambda_iI)^-e_i=0,\\
(XX^T-\lambda_i I)^-(XX^T-\lambda_i I)&(XX^T-\lambda_i I)^-=(XX^T-\lambda_i I)^-,\\
(XX^T-\lambda_i I)(XX^T-\lambda_i I)&^-(XX^T-\lambda_i I)=(XX^T-\lambda_i I).
\end{align*}
Based on these equations and $(XX^T-\lambda_i I)^-=A(\Lambda-\lambda_i I)^+A^T$, we obtain
\begin{align}\label{eq: Lem2-(2)}
(XX^T-\lambda_i I)^-(XX^T-\lambda_i I)=I-y_iy_i^T,
\end{align}
\begin{align}\label{eq: Lem2-(3)}
(XX^T-\lambda_i I)^-y_i=0.
\end{align}
Next, we calculate the first-order derivatives of the EDS. First, we transform Eq. (10) in the paper into the following term.
\begin{align}\label{eq: Lem2-(4)}
(XX^T-\lambda_i I)\frac{\partial y_i}{\partial X}=y_i\frac{\partial \lambda_i}{\partial X}-\frac{\partial XX^T}{\partial X}y_i.
\end{align}
Then, we pre-multiply both sides of $(XX^T-\lambda_i I)^-$ and arrive at the following equation based on \eqref{eq: Lem2-(2)}.
\begin{align*}
(I-y_iy_i^T)\frac{\partial y_i}{\partial X}=
(&XX^T-\lambda_i I)^-y_i\frac{\partial \lambda_i}{\partial X} \nonumber \\
&-(XX^T-\lambda_i I)^-\frac{\partial XX^T}{\partial X}y_i.
\end{align*}
Due to \eqref{eq: Lem2-(3)}, we have
\begin{align}\label{eq: Lem2-(5)}
\frac{\partial y_i}{\partial X}-y_iy_i^T\frac{\partial y_i}{\partial X}=
-(XX^T-\lambda_i I)^-\frac{\partial XX^T}{\partial X}y_i.
\end{align}
Since $y_i^Ty_i=1$, we arrive at
\begin{align}\label{eq: Lem2-(6)}
\frac{\partial y_i^T}{\partial X}y_i+\frac{\partial y_i}{\partial X}y_i^T=0 \Rightarrow y_i^T\frac{\partial y_i}{\partial X}=0.
\end{align}
Hence, we arrive at the derivatives of the eigenvector.
\begin{align*}
\frac{\partial y_i}{\partial X}= -(XX^T-\lambda_i I)^+  \frac{\partial XX^T}{\partial X} y_i.
\end{align*}
We only need to pre-multiply both sides of \eqref{eq: Lem2-(4)} with $y_i^T$ to calculate the derivatives of the eigenvalue, 
\begin{align*}
\frac{\partial \lambda_i}{\partial X}= y_i^T  \frac{\partial XX^T}{\partial X} y_i.
\end{align*}
This lemma is proven.
\end{proof}

\end{document}